\documentclass{article}

\PassOptionsToPackage{numbers, compress}{natbib}

\usepackage[preprint]{neurips_2026}

\usepackage[utf8]{inputenc} 
\usepackage[T1]{fontenc}    
\usepackage{hyperref}       
\usepackage{url}            
\usepackage{booktabs}       
\usepackage{amsfonts}       
\usepackage{nicefrac}       
\usepackage{microtype}      
\usepackage{xcolor}         
\usepackage{subcaption}      
\usepackage{graphicx}
\usepackage{multirow}       
\usepackage{wrapfig}        
\usepackage{smile}
\usepackage[most]{tcolorbox}

\usepackage{enumitem} 

\title{DiffATS: Diffusion in Aligned Tensor Space}

\author{%
  Jinhua Lyu$^{\ast}$
  \quad Tianmin Yu$^{\ast}$
  \quad Brian Kim$^{\ast}$
  \quad Lizhuo Zhou$^{\ast}$
  \quad Chanwook Park$^{\ast}$
  \quad Naichen Shi$^{\ast,\dagger}$
}

\begin{document}

\maketitle

{\renewcommand{\thefootnote}{}%
 \footnotetext{$^{\ast}$All authors are with Northwestern University.\quad
   $^{\dagger}$Corresponding authors:
   \texttt{naichen.shi@northwestern.edu}.}%
 \addtocounter{footnote}{-1}}

\begin{abstract}
Direct diffusion modeling of high-resolution spatiotemporal fields is computationally challenging. 
Parameter-efficient primitives address this by representing high-dimensional data with a compact set of parameters.
In this paper, we construct data-dependent tensor primitives 
without pretrained compression autoencoders. 
Our construction starts from Tucker decomposition, 
which captures low-rank multilinear structure through a core tensor and mode-wise factors.
However, Tucker factors are non-unique:
the same tensor can be represented by different rotated factors, 
which complicates generative modeling. 
We address this issue with orthogonal Procrustes (OP) alignment.
Specifically, we select medoid anchor matrices 
from the data and align the factor matrices to resolve the gauge ambiguity. 
This yields matrix Grassmannian primitives and tensor Grassmannian primitives 
that are compact, data-adaptive, and directly decodable by explicit multilinear reconstruction.
Theoretically, we prove that the proposed primitive maps are homeomorphisms
between low-rank tensors and their corresponding primitive
spaces, certifying that the representations are non-degenerate and
topologically faithful. 
Building on these primitives, we propose
\emph{Diffusion in Aligned Tensor Space} (DiffATS), a generative framework that
trains diffusion models directly on aligned tensor primitives. 
Across images,
videos, and PDE solutions, DiffATS achieves strong unconditional and
conditional generation performance while compressing original data by \(3.9\times\) to \(210\times\), 
without relying on any pretrained deep compression autoencoders.
\end{abstract}

\begin{figure}[h]
    \centering
    \includegraphics[width=1.00\textwidth]{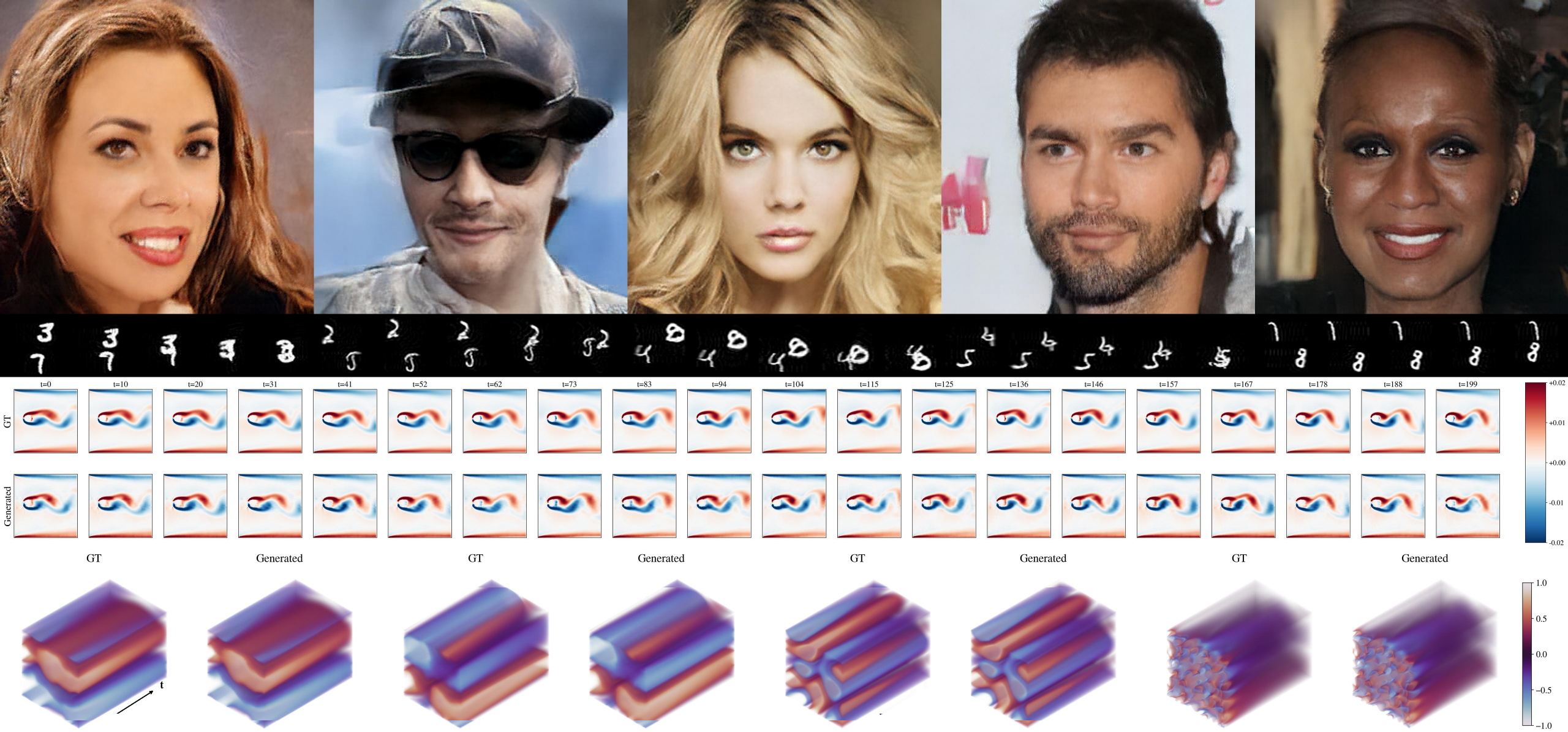}
    \caption{\textbf{Generation results of DiffATS on CelebA-HQ 1024, Moving MNIST, K\'arm\'an vortex street and 2-d Burgers' equation.}
    ``GT'' denotes the ground truth.}
    \label{fig:all_generation}
\end{figure}

\section{Introduction}

Diffusion models (DMs)~\cite{sohl2015deep, song2019generative, ho2020denoising, song2020score, ddim}
have become central tools for learning and sampling from complex unknown distributions,
with success in generating images~\cite{ho2020denoising,ddim},
audio~\cite{kong2020diffwave,liu2023audioldm}, videos~\cite{ho2022video,liu2024sora}, protein structures~\cite{abramson2024accurate}, 
inorganic materials~\cite{zeni2025generative}, and flow fields~\cite{shu2023physics}.
However, standard DMs often require huge computational resources to model high-resolution spatiotemporal data that are naturally represented as large tensors, such as videos, weather observations,
fluid states, and parametric PDE solutions. Direct diffusion in the original tensor space is computationally demanding as training and sampling require repeated function
evaluations in the full data dimension~\cite{rombach2022high}. This cost is particularly severe
for scientific tensors. For instance, GenCast~\cite{price2023gencast}, a diffusion model for
global weather forecasting, trains on tensors of size \(84\times720\times1440\) and requires
3.5 days on 32 TPUs.

To mitigate the computational cost, a common strategy is to perform diffusion 
in a lower-dimensional representation space. 
Latent diffusion models (LDMs)~\cite{rombach2022high} first train a regularized variational
autoencoder~\cite{kingma2013auto,rezende2014stochastic} to compress data into latent embeddings,
then learn a diffusion model in the latent space. Many subsequent image synthesis methods
build on this framework, such as transformer-based architectures~\cite{peebles2023scalable}
and high-resolution text-to-image models~\cite{gu2022vector, podell2023sdxl}. 
The same approach extends to video generation~\cite{he2022latent,blattmann2023align,ma2024latte},
with the latent
space coming from a pretrained or task-specific video autoencoder. LDMs have become
the de facto standard for natural images and videos, where large-scale pretrained autoencoders are
available.

Scientific data poses a different challenge.  
For the generative modeling of scientific problems, such as weather forecasting~\cite{price2023gencast} and 
turbulent flow generation~\cite{lienen2023zero},
off-the-shelf autoencoders pretrained on large-scale datasets are often not available. 
Training task-specific autoencoders is expensive: the cost scales with both network size
and data dimensionality. Training also requires sophisticated 
optimization techniques~\cite{van2017neural, esser2021taming,gu2022vector}.
This motivates autoencoder-free primitives that compress data through explicit transforms.
Recent examples include primitives based on discrete cosine transforms~\cite{ning2024dctdiff}
and tensor factorizations~\cite{chen2025generating,guo2026tucker}. These methods reduce
dimension efficiently, yet they rely on predefined cosine or tensor factor bases,
which can limit their ability to approximate the raw tensors. 
In light of these challenges, this paper asks the following question:

\begin{center}
    \textit{ Can we construct data-dependent parameter-efficient primitives for 
    high-resolution spatiotemporal data by exploiting their inherent low-rank structure 
    to improve generative modeling?}
\end{center}

The starting point is the Tucker decomposition~\cite{tucker1966some}, which represents a tensor
\(\cX\in\RR^{n_1\times\cdots\times n_d}\) by a core tensor and $d$ factor matrices.
This decomposition gives a natural data-adaptive and low-rank parametrization. 
However, as evidenced in Fig.~\ref{fig:toy_model}, 
training DMs on this parametrization yields inferior performance. 
We attribute this to the degeneracy of the standard Tucker decomposition: 
infinitely many combinations of core tensors and factor matrices yield the same tensor $\cX$.
This arbitrariness breaks the continuity of the primitive map, 
and complicates downstream modeling.

Our primary contribution lies in resolving this degeneracy problem through the orthogonal Procrustes (OP) alignment of factor matrices.
Under proper conditions, OP selects a unique representative from each Grassmannian equivalence class.
Building on this technique, we define two families of primitives:
\emph{matrix Grassmannian primitives} (MGPs) for low-rank matrices and
\emph{tensor Grassmannian primitives} (TGPs) for low-multilinear-rank tensors.
They preserve the parameter efficiency of Tucker representations 
while removing the
rotational ambiguity that destabilizes DM training.
Our theoretical analysis further shows that the primitive map \(\Phi\) is a homeomorphism
under an anchor-overlap condition, which guarantees that
the primitive space preserves the local topological structure of the original data space.

By combining these primitives with diffusion modeling,
we propose \emph{Diffusion in Aligned Tensor Space} (DiffATS). 
The pipeline is shown in Fig.~\ref{fig:pipeline}.
During training, DiffATS maps each high-dimensional tensor (or matrix) to
its TGP (or MGP) and trains DMs in the aligned primitive space. During inference,
the generated primitive is mapped back to the original data space through the
multilinear reconstruction map.

\begin{figure}[t]
    \centering
    \includegraphics[width=1.00\textwidth]{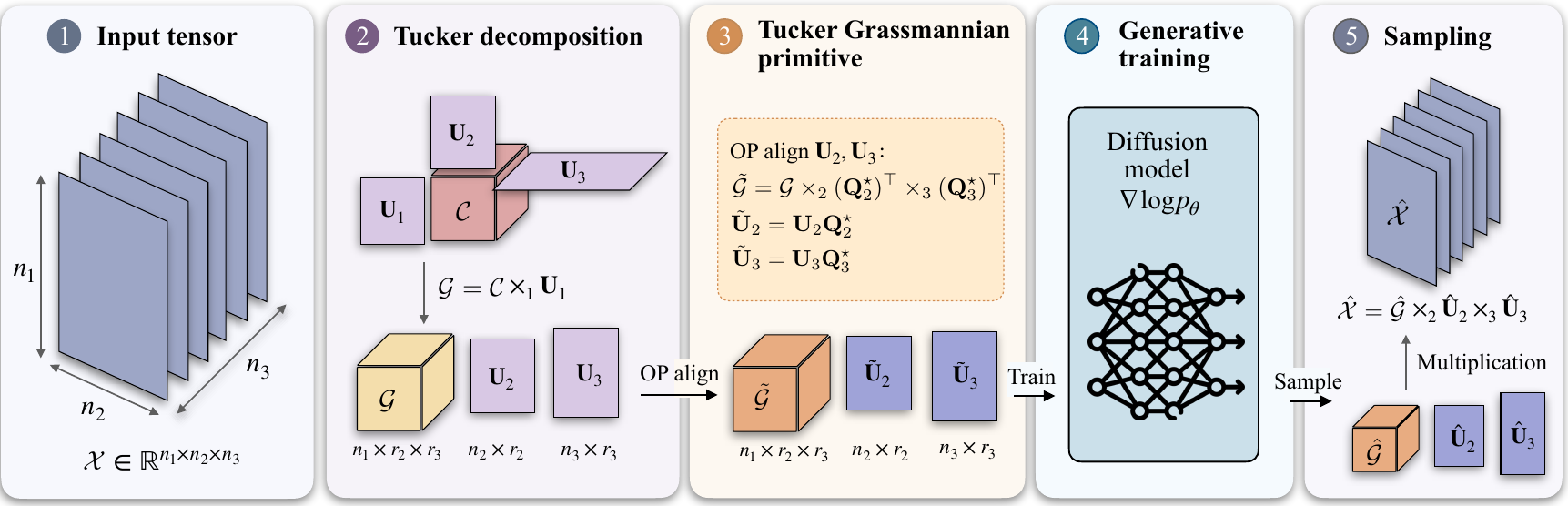}
    \caption{\textbf{Pipeline overview of DiffATS.}}
    \label{fig:pipeline}
\end{figure}

We conduct experiments on images, videos, and PDE solutions, 
with representative samples shown in Fig.~\ref{fig:all_generation}. 
Across all settings, DiffATS outperforms DCTDiff~\cite{ning2024dctdiff}, 
SDIFT~\cite{chen2025generating}, FNO~\cite{li2020fourier}, 
and average pooling (AvgPool) in generation quality, 
despite using compression ratios that match or exceed those of the baselines.


Our main contributions are summarized as follows:
\begin{itemize}[noitemsep]
    \item We introduce MGPs and TGPs as
    data-adaptive, parameter-efficient representations for low-rank matrices and tensors.
    These primitives retain the expressive power of Tucker decompositions while resolving
    the rotational ambiguity through anchored OP alignment.
    \item Theoretically, we prove the continuity and bijectivity of the primitive map $\Phi$ under overlapping conditions.
    To our knowledge, this is the first low-rank primitive family with
    a formal homeomorphism guarantee.
    \item We propose DiffATS, an end-to-end autoencoder-free diffusion framework that trains and samples
    directly in the MGP and TGP spaces, and reconstructs generated samples via the multilinear reconstruction map.
    \item We conduct extensive experiments on images, videos, and PDE solutions, and show that DiffATS outperforms
    all baselines on generation quality given a matching or higher compression ratio.
\end{itemize}





\section{Related Works}
We briefly present the overview of the related work in the main paper, while the comprehensive review is available in the supplementary material. 
DCTDiff~\cite{ning2024dctdiff} reduces image dimension with cosine bases; SDIFT~\cite{chen2025generating} and Tucker-based diffusion~\cite{guo2026tucker} do so
 with Tucker bases. DiffATS uses data-dependent rather than predefined bases. The idea of reducing representation-induced variation has been explored in molecular graph 
generation~\cite{hoogeboom2022equivariant, xu2022geodiff, igashov2024equivariant, abramson2024accurate, xuquotient, zhou2026rethinking} to resolve the rotation ambiguity in $\mathrm{SO}(3)$ 
through data augmentation~\cite{abramson2024accurate,xu2022geodiff}, projecting the score functions~\cite{xuquotient}, or projecting the target distributions~\cite{zhou2026rethinking}.
DiffATS leverages the idea of resolving variation through canonicalization but operates in the Grassmann manifold via OP. 
OP alignment has been widely used in the theoretical
analysis of nonconvex low-rank factorization~\cite{tu2016low, zhang2018fast, ma2024convergence}.
These works use Procrustes alignment as an analytical tool for characterizing the landscape of the optimization geometry, while we use it to construct primitives for diffusion model training.

\section{Background} \label{sec:background}
This section reviews the preliminaries used throughout the paper.

\noindent \textbf{Stiefel and Grassmann Manifolds.}
Given integers $r < n$, the \emph{Stiefel manifold} is
$\mathrm{Stie}(n,r) = \{\Xb \in \RR^{n \times r} : \Xb^\top \Xb = I_r\}$,
the set of $n \times r$ matrices with orthonormal columns, where $I_r$ is the $r \times r$ identity matrix. The \emph{Grassmann manifold} $\mathrm{Gr}(n,r)$ is the set of $r$-dimensional subspaces of $\RR^n$, which can also be identified with the set of $n \times n$ rank-$r$ projection matrices.

\noindent \textbf{Tucker Decomposition.}
The Tucker decomposition expresses a tensor as a core tensor multiplied by a factor matrix along each mode. 
We focus on the variant with orthonormal factor matrices, which is the form computed by standard algorithms 
such as the higher-order singular value decomposition (HOSVD)~\cite{de2000multilinear} and higher-order orthogonal 
iteration (HOOI)~\cite{de2000best}.

Following the convention of~\cite{kolda2009tensor}, the mode-$k$ product of a tensor $\cC \in \RR^{r_1 \times \cdots \times r_k \times \cdots \times r_d}$ with a matrix $\Ub_k \in \RR^{n_k \times r_k}$, denoted $\cC \times_k \Ub_k$, is the tensor in $\RR^{r_1 \times \cdots \times r_{k-1} \times n_k \times r_{k+1} \times \cdots \times r_d}$ with entries
\begin{align*}
  (\cC \times_k \Ub_k)_{i_1, \dots, i_{k-1}, j_k, i_{k+1}, \dots, i_d}
  = \sum_{i_k=1}^{r_k} \cC_{i_1, \dots, i_k, \dots, i_d} (\Ub_k)_{j_k, i_k},
\end{align*}
where $i_a \in \{1, \dots, r_a\}$ indexes mode $a$ of $\cC$, and $j_k \in \{1, \dots, n_k\}$ indexes the new mode-$k$ axis of the product. For $d=2$, this reduces to matrix CUR decomposition~\citep{mahoney2006tensor}: with $\cC \in \RR^{r_1 \times r_2}$ and $\Ub_k \in \RR^{n_k \times r_k}$, we have $\cC \times_1 \Ub_1 = \Ub_1 \cC$ and $\cC \times_2 \Ub_2 = \cC \Ub_2^\top$.

Given a tensor $\cX \in \RR^{n_1 \times \cdots \times n_d}$, the rank-$(r_1, \dots, r_d)$ Tucker decomposition seeks a core tensor $\cC \in \RR^{r_1 \times \cdots \times r_d}$ and factor matrices $\Ub_k \in \mathrm{Stie}(n_k, r_k)$ that minimize the reconstruction error $
  \min_{\cC, \Ub_1, \dots, \Ub_d} \|\cX - \cC \times_1 \Ub_1 \times_2 \Ub_2 \cdots \times_d \Ub_d\|_F^2,
  \quad $ under the constraint that $\ \Ub_k^\top \Ub_k = I_{r_k}$, $ k = 1, \dots, d.$
Both HOSVD and HOOI compute approximate solutions to this problem. 
We refer to any such solution $(\cC, \Ub_1, \dots, \Ub_d)$ as a Tucker decomposition of $\cX$ with rank $(r_1, \dots, r_d)$.

\noindent \textbf{Score-matching Diffusion Models.}
A diffusion model~\cite{song2020score} learns the data distribution $p_0$ through a pair of forward
and reverse-time stochastic differential equations (SDEs). The forward SDE $d\xb = \fb(\xb, t)dt + g(t) d\wb$
gradually corrupts a sample $\xb(0) \sim p_0$ into noise as $t$ grows from $0$ to $T$,
where $\wb$ is a standard Wiener process.
Let $p_t$ denote the marginal distribution of $\xb(t)$. By~\cite{anderson1982reverse}, the reverse-time SDE $d\xb = [-\fb(\xb, t) - \frac{g(t)^2+\sigma(t)^2}{2} \nabla_{\xb} \log p_t(\xb)] dt + \sigma(t) d\bar\wb$
recovers $p_0$ when integrated backward from $\xb(T) \sim p_T$~\citep{karras2022elucidating}.
The only unknown term in the reverse-time SDE is the \emph{score function} $\nabla_{\xb} \log p_t(\xb)$, 
the gradient of the log marginal diffused density at $t$ with respect to $\xb$. 
Score matching trains a neural network $s_\theta(\xb, t)$ 
to approximate the score, and sampling is performed by simulating the reverse-time SDE 
with $s_\theta$ in place of $\nabla_{\xb} \log p_t$.

\section{Search for Tensor Primitives}

Given a tensor $\cX$ with low Tucker rank, 
our goal is to construct a parameter-efficient primitive $\Phi(\cX)$ 
amenable to generative modeling. 
An effective tensor primitive should satisfy three desiderata: 
(i) it is expressive enough to represent a rich class of low-rank tensors; 
(ii) it reduces the parameter count of the original tensor; 
and (iii) it is compatible with standard diffusion model training and sampling.

\subsection{Why Are Natural Low-rank Primitives Undesirable?}
An unpretentious choice is to take the Tucker factors themselves as the primitive:
$\Phi(\cX) = (\cC, \Ub_1, \dots, \Ub_d)$, where $\cC$ is the core tensor and
$\Ub_1, \dots, \Ub_d$ are the factor matrices. We refer to this representation
as the \emph{natural primitive}. Since HOSVD and HOOI provide efficient
algorithms for computing such factors, the natural primitive appears appealing
at first glance.

However, the Tucker decomposition is non-unique. For any orthogonal matrix
$\Qb \in \mathrm{O}(r_k) = \{\Qb \in \RR^{r_k \times r_k} : \Qb^\top \Qb = \Ib_{r_k}\}$
and any mode $k$, replacing $(\cC, \Ub_k)$ with $(\cC \times_k \Qb^\top, \Ub_k \Qb)$
leaves the reconstructed tensor unchanged:
  $(\cC \times_k \Qb^\top) \times_1 \Ub_1 \cdots \times_k (\Ub_k \Qb) \cdots \times_d \Ub_d
  = \cC \times_1 \Ub_1 \cdots \times_d \Ub_d$.
Thus a single tensor corresponds to infinitely many equivalent natural
primitives. These primitives all reconstruct $\cX$ exactly, yet can lie
arbitrarily far apart in the primitive space. This poses a problem for
diffusion training: if a single $\cX$ is encoded by a dispersed cloud of
primitives, the diffusion model must spread mass across all of them,
artificially inflating the support of the target distribution.

We illustrate this dispersion with a scalar toy example. Let $X = UV$ with
$U, V \in \RR$. For any fixed $x$, the equivalent factorizations form the
hyperbola $\{(u, v) : uv = x\}$, which is unbounded, even when $X$ has
bounded support.

\begin{figure}[t]
    \centering
    \includegraphics[width=1.00\textwidth]{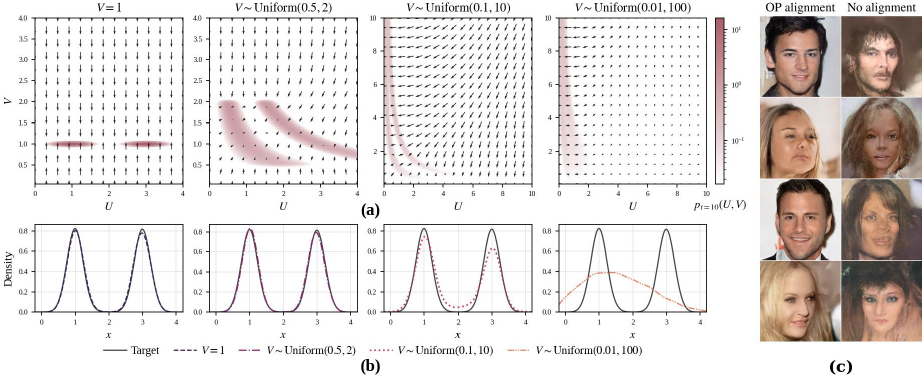}\hfill
    \caption{\textbf{Effect of OP alignment.}
    \textbf{(a)} Density and learned score fields $\nabla_{(U, V)} \log p_t(U, V)$ at diffusion step $t=10$
    for four sampling distributions of $V$ (arrows normalized within each panel).
    Pink shading shows the density $p_{10}(U, V)$, with darker color indicating higher density.
    \textbf{(b)} Sample KDE of $X$ generated by the trained $s_{\theta}$.
    \textbf{(c)} Samples generated by DMs trained on CelebA-HQ ($1024 \times 1024$),
    with and without OP alignment. Both models share the same training budget.}
    \label{fig:toy_model}
\end{figure}

To evaluate diffusion generative modeling on this primitive, we sample
$N = 20{,}000$ points $\{x_i\}_{i=1}^N$ from the two-component Gaussian mixture:
$p(X) = \tfrac{1}{2} \mathcal{N}(1, 0.2^2) + \tfrac{1}{2} \mathcal{N}(3, 0.2^2)$.
For each $x_i$, we draw $v_i \sim \mathrm{Uniform}(a, b)$ and set
$u_i = x_i / v_i$, producing equivalent factorizations $\{(u_i, v_i)\}_{i=1}^N$.
We then train a two-layer MLP score network $s_\theta(u, v, t)$ with the
denoising score-matching objective~\citep{ho2020denoising} to approximate
$\nabla \log p_t(u, v)$.

Fig.~\ref{fig:toy_model} (a) shows that increasing the range of $V$ spreads the
training distribution over a much larger region of the primitive space.
The score field becomes harder to learn, and the generated $X = UV$ deviates
from the target mixture distribution, as is evidenced in Fig.~\ref{fig:toy_model} (b). The failure mode arises from the
unnecessary variance in the natural $(U, V)$ primitive:
the variance of $U$ and $V$ can be orders of magnitude larger than that of $X$.



This observation suggests a guiding principle: \emph{a useful primitive should not 
introduce variation within an equivalence class}. In the scalar example, 
anchoring $V = 1$ and encoding $X$ as $(U, V) = (X, 1)$ produces a primitive 
distribution as concentrated as $p(X)$, yielding an accurate sampler (Fig.~\ref{fig:toy_model}(a,b)). Although the scalar model is deliberately simple, the same mechanism appears in Tucker factors. Fig.~\ref{fig:toy_model} (c) illustrates this on CelebA-HQ. These observations motivate the remainder of this section, which extends the anchoring idea from the scalar model to the multilinear setting via OP alignment.

\subsection{Orthogonal Procrustes Provides Stable Representations}

The toy example highlights the importance of controlling dispersion of primitives. We therefore seek a primitive map $\Phi$ that is
single-valued and continuous with respect to the underlying tensor.

\noindent\textbf{Orthogonal Procrustes Alignment.}
We fix the rotation ambiguity in Tucker factorization through OP.
Given orthonormal matrices $\Ab, \Bb \in \stie(n_1, n_2)$, OP finds the orthogonal matrix,
$\Qb^\star \in \arg\min_{\Qb \in \mathrm{O}(n_2)} \|\Ab \Qb - \Bb\|_F^2$.
The solution to the OP problem admits a closed form~\cite{schonemann1966generalized}:
if $\Ab^\top \Bb = \Lb \bLambda \Rb^\top$ is an SVD, then $\Qb^\star = \Lb \Rb^\top$ is an optimal solution. OP alignment allows us to choose a canonical representative for each $r$-dimensional
subspace of $\RR^n$, relative to a fixed anchor $\Vb_0$. The following proposition
formalizes this selection.

\begin{proposition}
    \label{prop:uniqueness}
    Let $\Vb, \Vb_0 \in \stie(n, r)$ satisfy $\Vb_0^\top \Vb \Vb^\top \Vb_0 \succ 0$. Then the optimization problem
      $\arg\min_{\tilde\Vb : \tilde\Vb \tilde\Vb^\top = \Vb\Vb^\top} \|\Vb_0 - \tilde\Vb\|_F^2$
    has a unique solution $\Vb \Qb^\star$, where $\Qb^\star$ is the optimal solution to the OP problem $\arg\min_{\Qb \in \mathrm{O}(r)} \|\Vb_0 - \Vb \Qb\|_F^2$.
\end{proposition}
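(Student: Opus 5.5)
Two facts carry the argument: the feasible set is exactly the orbit $\{\Vb\Qb : \Qb\in \mathrm{O}(r)\}$, and on that orbit the objective is linear in $\Qb$, so minimizing it is a trace maximization that the SVD solves.

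\emph{Step 1: the feasible set is an orbit.} If $\Vb\Qb$ with $\Qb\in\mathrm{O}(r)$, then $(\Vb\Qb)(\Vb\Qb)^\top=\Vb\Vb^\top$, so the orbit is feasible. Conversely, let $\tilde\Vb\in\RR^{n\times r}$ satisfy $\tilde\Vb\tilde\Vb^\top=\Vb\Vb^\top$. The column space of $\tilde\Vb$ equals the range of $\tilde\Vb\tilde\Vb^\top$, which is $\mathrm{span}(\Vb)$. Hence $\tilde\Vb=\Vb\Qb$ with $\Qb=\Vb^\top\tilde\Vb$. Substituting gives $\Vb\Qb\Qb^\top\Vb^\top=\Vb\Vb^\top$, and multiplying by $\Vb^\top$ on the left and $\Vb$ on the right yields $\Qb\Qb^\top=I_r$. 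So the constraint set is exactly $\{\Vb\Qb:\Qb\in\mathrm{O}(r)\}$.

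\emph{Step 2: reduce to trace maximization.} Since $\Vb^\top\Vb=I_r$, the map $\Qb\mapsto\Vb\Qb$ is injective. Uniqueness of the minimizing $\tilde\Vb$ is therefore equivalent to uniqueness of the minimizing $\Qb$ in the OP problem. Expanding,
\[
\|\Vb_0-\Vb\Qb\|_F^2 = 2r - 2\,\mathrm{tr}(\Qb^\top \Vb^\top\Vb_0),
\]
so we must maximize $\mathrm{tr}(\Qb^\top\Mb)$ over $\Qb\in\mathrm{O}(r)$, where $\Mb=\Vb^\top\Vb_0$.

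\emph{Step 3: use the hypothesis to get a full-rank $\Mb$.} The hypothesis says $\Mb^\top\Mb=\Vb_0^\top\Vb\Vb^\top\Vb_0\succ0$, so $\Mb$ is invertible. Write its SVD as $\Mb=\Lb\bLambda\Rb^\top$ with $\bLambda$ having strictly positive diagonal entries $\lambda_i$.

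\emph{Step 4 (main step): uniqueness of the maximizer.} Set $\Wb=\Lb^\top\Qb\Rb\in\mathrm{O}(r)$. Then
\[
\mathrm{tr}(\Qb^\top\Mb)=\mathrm{tr}(\Wb^\top\bLambda)=\sum_i \lambda_i W_{ii}\le\sum_i\lambda_i,
\]
because $|W_{ii}|\le1$. Since every $\lambda_i>0$, equality forces $W_{ii}=1$ for all $i$. Each column of $\Wb$ is a unit vector, so $W_{ii}=1$ forces all off-diagonal entries of that column to vanish, giving $\Wb=I$. Hence $\Qb=\Lb\Rb^\top$ is the unique maximizer.

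This pins down $\Qb$ regardless of which SVD of $\Mb$ is chosen. An alternative description is $\Qb^\star=\Mb(\Mb^\top\Mb)^{-1/2}$, the polar factor, which is unique for invertible $\Mb$; this matches the closed form of Sch\"onemann cited above.

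Combining Steps 1--4, the unique minimizer is $\Vb\Qb^\star$.

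The delicate point is Step 4. Strict positivity of every singular value, which is exactly what the hypothesis supplies, is needed there. If some $\lambda_i=0$, the corresponding $W_{ii}$ is unconstrained and uniqueness fails.
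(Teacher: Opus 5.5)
Your proof is correct and follows the paper's route: reduce the constraint set to the orbit $\{\Vb\Qb:\Qb\in\mathrm{O}(r)\}$, rewrite the objective as $2r-2\,\mathrm{tr}(\Qb^\top\Vb^\top\Vb_0)$, and use an SVD $\Vb^\top\Vb_0=\Lb\bLambda\Rb^\top$ with $|W_{ii}|\le 1$ to obtain $\Qb^\star=\Lb\Rb^\top=\Vb^\top\Vb_0(\Vb_0^\top\Vb\Vb^\top\Vb_0)^{-1/2}$. Your Step 4 equality case (all $\lambda_i>0$ forces $\Wb=I$) is what actually gives uniqueness, and it is only implicit in the paper, whose uniqueness paragraph instead shows that $\Lb\Rb^\top$ does not depend on the chosen SVD; likewise, your Step 1 proves the orbit characterization that the paper only asserts.
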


We denote the unique solution by $\op(\Vb, \Vb_0)$. 
Geometrically, Prop.~\ref{prop:uniqueness} selects the orthonormal 
frame of $\mathrm{span}(\Vb)$ that is closest to the anchor $\Vb_0$. 
Note that $\op(\Vb, \Vb_0)$ depends on $\Vb$ only through $\mathrm{span}(\Vb)$: $\op(\Vb \Qb, \Vb_0) = \op(\Vb, \Vb_0)$ for any $\Qb \in \mathrm{O}(r)$. 
Thus, $\op(\Vb, \Vb_0)$ defines a canonical representative of the equivalence class $[\Vb] = \{\Vb \Qb : \Qb \in \mathrm{O}(r)\}$, 
which is in one-to-one correspondence with a point on the Grassmann manifold. We define
$\mathrm{gr}(n, r \mid \Vb_0) := \{\op(\Vb, \Vb_0) : \Vb \in \stie(n, r),\ \Vb_0^\top \Vb \Vb^\top \Vb_0 \succ 0\}$.
Then $\mathrm{gr}(n, r \mid \Vb_0)$ has a one-to-one correspondence with the subset of $\mathrm{Gr}(n, r)$ 
consisting of subspaces having no direction perpendicular to $\mathrm{span}(\Vb_0)$.

\noindent\textbf{Reduction in Dispersion.}
We next quantify the contraction induced by OP alignment.
We start with a random subspace model. Let $\Vb_0, \Vb \in \mathrm{Stie}(p, r)$
be two independent, uniformly distributed orthonormal matrices with respect to the Haar measure. The expected squared distance between
them is
$\mathbb{E}\|\Vb - \Vb_0\|_F^2
  = \mathbb{E}\|\Vb\|_F^2 + \mathbb{E}\|\Vb_0\|_F^2 - 2\mathrm{Tr}(\mathbb{E}[\Vb^\top \Vb_0])
  = 2r$.
In contrast, the following proposition characterizes their asymptotic
distance after OP alignment.

\begin{proposition}[Asymptotic Procrustes distance between random subspaces] 
\label{prop:procrustes_reduction}
Let $\Vb_0,\Vb\in\mathrm{Stie}(p,r)$ be two independent Haar-distributed
orthonormal matrices. Suppose that $p,r\to\infty$ and $\frac{p}{r}\to c>1$. Let
$\Qb^\star \in \argmin_{\Qb\in\mathrm{O}(r)}
\|\Vb_0-\Vb\Qb\|_F^2 $
be an optimal OP solution and $\tilde\Vb=\Vb\Qb^\star$. Then $\|\Vb_0-\tilde\Vb\|_F^2
$ will converge to $
2r(1-\ell(c))+o(r)$, 
where
$\ell(c)
=
\frac{c}{\pi}
\left(
\sqrt b
-
\sqrt{1-b}
\arctan\sqrt{\frac{b}{1-b}}
\right)
+
\max\{2-c,0\}$, and $b=\frac{4(c-1)}{c^2}$.
\end{proposition}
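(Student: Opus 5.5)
Write $\tilde\Vb=\Vb\Qb^\star$. The plan is to reduce the Procrustes distance to a spectral statistic of $\Vb_0^\top\Vb$, and then evaluate that statistic with the known limiting spectral law of two independent random projections (the Wachter / Jacobi--MANOVA law).

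\textbf{Step 1: reduce to singular values.} Since $\|\Vb_0\|_F^2=\|\tilde\Vb\|_F^2=r$, we have
\[
\|\Vb_0-\Vb\Qb\|_F^2 = 2r-2\,\mathrm{Tr}(\Qb^\top\Vb^\top\Vb_0).
\]
By the closed form of \cite{schonemann1966generalized} recalled before Prop.~\ref{prop:uniqueness}, the maximum of the trace over $\mathrm{O}(r)$ equals the nuclear norm $\|\Vb^\top\Vb_0\|_*=\sum_{i=1}^r\sigma_i$. The $\sigma_i=\cos\theta_i$ are the cosines of the principal angles between $\mathrm{span}(\Vb)$ and $\mathrm{span}(\Vb_0)$. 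Hence
\[
\|\Vb_0-\tilde\Vb\|_F^2=2r\Bigl(1-\tfrac1r\sum_{i=1}^r\sqrt{\lambda_i}\Bigr),
\]
where the $\lambda_i=\sigma_i^2\in[0,1]$ are the eigenvalues of $\Vb_0^\top\Vb\Vb^\top\Vb_0$. It therefore suffices to show that $\tfrac1r\sum_i\sqrt{\lambda_i}\to\ell(c)$ (almost surely or in probability), which gives the $o(r)$ error.

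\textbf{Step 2: limiting spectral law.} The nonzero spectrum of $\Vb_0^\top\Vb\Vb^\top\Vb_0$ coincides with that of $\Pb_0\Pb$. Here $\Pb_0=\Vb_0\Vb_0^\top$ and $\Pb=\Vb\Vb^\top$ are independent Haar-random rank-$r$ projections in $\RR^p$, and by rotation invariance they are in generic position. The classical result on products of free projections (Wachter; equivalently the free multiplicative convolution of two Bernoulli laws with parameter $\alpha=\beta=r/p\to1/c$) says that the empirical spectral distribution of the $r\times r$ matrix converges weakly a.s. to a limit $\mu_c$ with two parts.

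The \emph{atom} sits at $1$ with mass $\max\{2-c,0\}$. This comes from the intersection $\mathrm{span}(\Vb)\cap\mathrm{span}(\Vb_0)$, which generically has dimension $\max\{2r-p,0\}$, normalized by $r$. There is no atom at $0$ in the $r\times r$ picture, because the relevant mass $\max\{1-(c-1),0\}$ is absorbed elsewhere. I would double-check this bookkeeping carefully against the general formula, with atoms at $0$ of mass $\max(0,1-\alpha-\beta)$ relative to $p$.

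The \emph{absolutely continuous part} is supported on $[\lambda_-,\lambda_+]$, where $\lambda_\pm=\bigl(\sqrt{\alpha(1-\beta)}\pm\sqrt{\beta(1-\alpha)}\bigr)^2$. With $\alpha=\beta=1/c$ this gives $\lambda_-=0$ and $\lambda_+=4(c-1)/c^2=b$. The density, renormalized to the $r\times r$ matrix, is
\[
f_c(x)=\frac{c}{2\pi}\frac{\sqrt{x(b-x)}}{x(1-x)}.
\]

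\textbf{Step 3: integrate.} Since $\sqrt{x}$ is bounded and continuous on $[0,1]$, weak convergence gives
\[
\tfrac1r\sum_i\sqrt{\lambda_i}\to\int\sqrt{x}\,d\mu_c=\max\{2-c,0\}+\frac{c}{2\pi}\int_0^b\frac{\sqrt{b-x}}{1-x}\,dx .
\]
To evaluate the integral, substitute $x=b-u^2$ and use partial fractions. The result is
\[
\int_0^b\frac{\sqrt{b-x}}{1-x}\,dx=2\Bigl(\sqrt b-\sqrt{1-b}\,\arctan\sqrt{\tfrac{b}{1-b}}\Bigr),
\]
which reproduces $\ell(c)$.

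\textbf{Main obstacle.} The hard step is Step 2: getting the exact normalization of $\mu_c$, namely the density constant and the atom masses. In the regime $c<2$ the two subspaces must overlap, so one has to track how the free-probability law of $\Pb_0\Pb$ on $\RR^p$ restricts to the $r\times r$ compressed matrix. As a consistency check I would verify $\mu_c([0,1])=1$: the continuous part should integrate to $\min\{1,c-1\}$, matching the atom mass $\max\{2-c,0\}$. I would also check the $c\to\infty$ limit, where $\ell(c)\to0$ and we recover the unaligned value $2r$.
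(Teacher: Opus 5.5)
Your proposal is correct and follows the paper's proof. Both reduce the distance to $2r-2\|\Vb_0^\top\Vb\|_*$, invoke Wachter's law with $\alpha=\beta=1/c$, and integrate $\sqrt{x}$ against it. The only difference is normalization: you take the limit law directly for the $r\times r$ matrix (density $\frac{c}{2\pi}\frac{\sqrt{x(b-x)}}{x(1-x)}$ plus an atom $\max\{2-c,0\}$ at $1$), whereas the paper normalizes the $p\times p$ matrix $\Pb_0\Pb\Pb_0$ by $p$, which carries an extra atom $1-1/c$ at $0$, and multiplies by $c$ at the end.

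Your stated reason for the missing zero atom is muddled, since $\max\{1-(c-1),0\}=\max\{2-c,0\}$ is the atom at $1$. The conclusion is still right. The $p$-normalized zero atom $1-1/c$ is exactly the $p-r$ trivial zeros from $\ker\Pb_0$. Moreover, $\mathrm{span}(\Vb_0)\cap\mathrm{span}(\Vb)^\perp$ is generically trivial, so the $r\times r$ matrix has no zero atom.
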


Prop.~\ref{prop:procrustes_reduction} shows that OP alignment reduces the squared
distance by $\approx 2r\ell(c)$. For example, when $c = p/r = 4$, $\ell(c) \approx 0.44$,
so OP alignment removes about $44\%$ of the unaligned baseline distance.

\begin{wrapfigure}{r}{0.5\textwidth}
    \centering
    \vspace{-12pt}
    \includegraphics[width=0.48\textwidth]{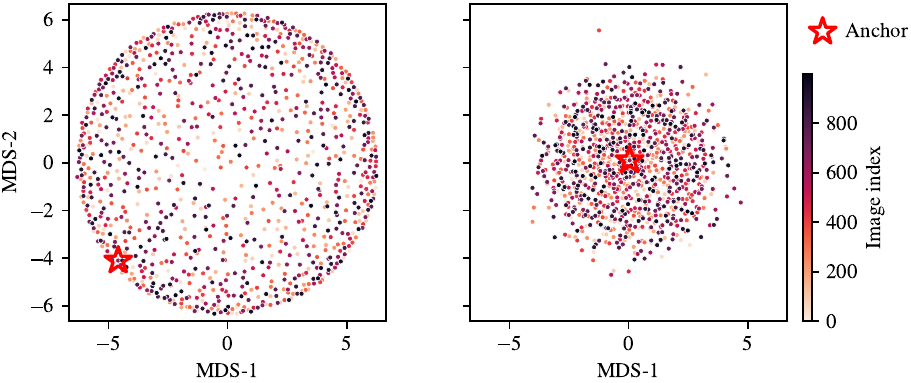}
    \caption{\textbf{OP alignment of factor matrices makes diffusion models easier to train.}
    \textbf{(Left):} Without alignment.
    \textbf{(Right):} With OP alignment.}
    \vspace{-10pt}
    \label{fig:procrustes_alignment}
\end{wrapfigure}
To visualize the dispersion reduction on realistic data, we randomly select 
$N = 1000$ images from CelebA-HQ, patchify each into $32 \times 32$ patches, 
and compute the SVD of the patch matrix on the first channel, yielding a 
right singular matrix $\Vb_i$ per image. We select the anchor as the medoid of these 
$N = 1000$ right singular matrices, following Step 2 of Sec.~\ref{subsec:mgp}. 
Fig.~\ref{fig:procrustes_alignment} visualizes the right singular matrices
under multidimensional scaling (MDS)~\citep{kruskal1978multidimensional}, with each marker corresponding to one 
$\Vb_i$ projected to two dimensions. Before alignment, the matrices are 
widely scattered. After OP alignment to a common anchor, they concentrate 
around the anchor, indicating that alignment substantially reduces the 
variation in the raw singular subspaces.



\subsection{Matrix Grassmannian Primitives (MGPs)}
\label{subsec:mgp}

We now leverage OP alignment to construct the mapping $\Phi$ from a matrix to its primitives. 
The result is named the \emph{matrix Grassmannian
primitive}. We take three steps to implement $\Phi$.

\noindent\textit{\underline{Step 1: Singular value decomposition.}} 
Consider a rank-$r$ matrix $\Mb \in \RR^{n_1 \times n_2}$ with singular value decomposition 
$\Mb = \Ub \bSigma \Vb^\top$, where $\Ub \in \mathrm{Stie}(n_1, r)$, 
$\Vb \in \mathrm{Stie}(n_2, r)$, and $\bSigma \in \RR^{r \times r}$ is diagonal. 
In real-world applications, observation matrices are rarely exactly low-rank; however, if
their singular values decay rapidly, they can be well-approximated by a low-rank matrix. 
In that case, we apply truncated SVD ($r$-SVD) to retain the top $r$ singular vectors.

\noindent\textit{\underline{Step 2: Anchor matrix selection.}} 
We choose the anchor as the training sample whose right singular subspace is most representative 
of the empirical distribution. Specifically, given candidate right singular subspaces 
$\{\Vb_i\}_{i=1}^N$, the medoid index is $i^\star \in \arg\max_{i \in [N]} \sum_{j=1}^N \|\Vb_i^\top \Vb_j\|_F^2$. The quantity $\|\Vb_i^\top \Vb_j\|_F^2$ measures the overlap between two subspaces, 
with larger values indicating closer alignment. Hence setting $\Vb_0 = \Vb_{i^\star}$ 
creates the largest average overlap with the training subspaces.

\noindent\textit{\underline{Step 3: Primitive construction.}} 
For each data matrix $\Mb_i$, we apply OP to align its right singular matrix $\Vb_i$ 
to the anchor $\Vb_0$, producing the optimal rotation $\Qb_i^\star$ such that 
$\Vb_i \Qb_i^\star$ is closest to $\Vb_0$. We obtain the pair 
$\Phi(\Mb_i) = (\Ub_i \bSigma_i \Qb_i^\star, \Vb_i \Qb_i^\star)$. We call this primitive 
a \emph{matrix Grassmannian primitive} (MGP) as it exploits the inherent low-rank structure in matrices.

The total parameter count of $\Phi(\Mb_i)$ is $(n_1 + n_2) r$, 
compared to $n_1 n_2$ for the original matrix $\Mb_i$. 
The MGP thus achieves substantial parameter reduction when $r \ll \min(n_1, n_2)$, 
while preserving exact reconstruction for rank-$r$ matrices.

\noindent\textbf{Analysis of the Matrix Grassmannian Primitive.}
The MGP construction explicitly respects the low-rank structure 
in matrices while preventing the dispersion of natural primitives. 
We justify the MGP $\Phi$ by characterizing its continuity.

\begin{theorem}\label{thm:matrix-case}
    For $r < n_1, n_2$, let $\cM(n_1, n_2, r \mid \Vb_0) = \{\Mb \in \RR^{n_1 \times n_2} : \mathrm{rank}(\Mb) = r, \Vb_0^\top \Mb^\top \Mb \Vb_0 \succ 0\}$ 
    denote rank-$r$ matrices whose principal angles with $\mathrm{span}(\Vb_0)$ 
    are all strictly smaller than $\pi/2$, and $\RR_*^{n_1 \times r} = \{\tilde\Ub \in \RR^{n_1 \times r} : \mathrm{rank}(\tilde\Ub) = r\}$. Then the map
    \begin{align}
        \Phi(\cdot \mid \Vb_0): \mathcal M(n_1, n_2, r \mid \Vb_0) &\to \RR_*^{n_1 \times r} \otimes \mathrm{gr}(n_2, r \mid \Vb_0)\nonumber, \quad \Mb \mapsto (\Mb \tilde\Vb, \tilde\Vb)
    \end{align}
    is a homeomorphism, where $\tilde \Vb=\op(\Vb,\Vb_0)$ is solved via the OP 
    alignment introduced in Prop.~\ref{prop:uniqueness}, and $\Vb$ comes from the SVD $\Mb=\Ub\bSigma\Vb^\top$.
\end{theorem}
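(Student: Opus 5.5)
The proof factors $\Phi$ through the row space of $\Mb$. The pieces are: well-definedness, an explicit inverse, continuity of $\Phi$, and continuity of the inverse.

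\textbf{Well-definedness.} For $\Mb\in\cM(n_1,n_2,r\mid\Vb_0)$ with SVD $\Mb=\Ub\bSigma\Vb^\top$ we have $\Mb^\top\Mb=\Vb\bSigma^2\Vb^\top$. Hence $\Vb_0^\top\Mb^\top\Mb\Vb_0\succ0$ holds exactly when $\Vb^\top\Vb_0$ is invertible, which is in turn equivalent to $\Vb_0^\top\Vb\Vb^\top\Vb_0\succ0$. So Prop.~\ref{prop:uniqueness} applies, and $\tilde\Vb=\op(\Vb,\Vb_0)$ is uniquely defined.

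Because $\op$ depends only on $\mathrm{span}(\Vb)$, $\tilde\Vb$ does not depend on which SVD is chosen. Since $\tilde\Vb\tilde\Vb^\top=\Vb\Vb^\top$ is the projector onto the row space, $\Mb=\Mb\tilde\Vb\tilde\Vb^\top$. Moreover $\Mb\tilde\Vb=\Ub\bSigma\Qb^\star$ has rank $r$, so $\Phi$ indeed lands in $\RR_*^{n_1\times r}\otimes\mathrm{gr}(n_2,r\mid\Vb_0)$.

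\textbf{Inverse.} Define $\Psi(\tilde\Ub,\tilde\Vb)=\tilde\Ub\tilde\Vb^\top$.
\begin{itemize}
\item $\Psi\circ\Phi=\mathrm{id}$ follows from $\Mb=\Mb\tilde\Vb\tilde\Vb^\top$.
\item For $\Phi\circ\Psi=\mathrm{id}$: given $(\tilde\Ub,\tilde\Vb)$ with $\tilde\Ub$ of full column rank, $\Mb=\tilde\Ub\tilde\Vb^\top$ has rank $r$ and row space $\mathrm{span}(\tilde\Vb)$. By Prop.~\ref{prop:uniqueness}, elements of $\mathrm{gr}(n_2,r\mid\Vb_0)$ satisfy $\op(\tilde\Vb,\Vb_0)=\tilde\Vb$ (idempotence) and $\tilde\Vb^\top\Vb_0$ invertible. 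Therefore $\Mb\in\cM$, and $\Phi(\Mb)=(\tilde\Ub\tilde\Vb^\top\tilde\Vb,\tilde\Vb)=(\tilde\Ub,\tilde\Vb)$.
\end{itemize}
$\Psi$ is polynomial, hence continuous.

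\textbf{Continuity of $\Phi$ (the main step).} Given that $\Mb\mapsto\Mb\tilde\Vb$ is continuous once $\tilde\Vb$ is, it suffices to show that $\Mb\mapsto\tilde\Vb$ is continuous. I would avoid the SVD, whose singular vectors can jump at repeated singular values, and use a closed form instead.

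The row-space projector is $\Pb=\Mb^\top(\Mb\Mb^\top)^{+}\Mb$. On the set of rank-$r$ matrices it is continuous, because rank is constant there and the pseudoinverse is continuous on constant-rank sets. Alternatively, $\Pb$ is the spectral projector of $\Mb^\top\Mb$ onto its nonzero eigenvalues, separated from $0$ by a gap, which is continuous by a Riesz contour integral.

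Next write the OP solution in polar form. With $\Vb^\top\Vb_0=\Lb\bLambda\Rb^\top$ we get $\Qb^\star=\Lb\Rb^\top$, so
\[
\tilde\Vb=\Vb\Lb\Rb^\top=\Vb(\Vb^\top\Vb_0)\big((\Vb^\top\Vb_0)^\top\Vb^\top\Vb_0\big)^{-1/2}=\Pb\Vb_0\,(\Vb_0^\top\Pb\Vb_0)^{-1/2}.
\]
Here invertibility of $\bLambda$ gives the polar factor $\Lb\Rb^\top=\Zb(\Zb^\top\Zb)^{-1/2}$ with $\Zb=\Vb^\top\Vb_0$.

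The right-hand side depends only on $\Pb$, and $\Vb_0^\top\Pb\Vb_0=\Vb_0^\top\Mb^\top(\Mb\Mb^\top)^{+}\Mb\Vb_0\succ0$ on $\cM$. Since the inverse square root is continuous on positive definite matrices, $\tilde\Vb$ depends continuously on $\Mb$. This closed form is the crux of the argument.

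Continuity of the bijection $\Phi$ together with its continuous inverse $\Psi$ proves that $\Phi$ is a homeomorphism, with both spaces carrying the subspace topology.
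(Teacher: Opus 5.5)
Your proof is correct and follows essentially the same route as the paper. The inverse is the product map $(\tilde\Ub,\tilde\Vb)\mapsto\tilde\Ub\tilde\Vb^\top$, and continuity of $\Mb\mapsto\tilde\Vb$ comes from continuity of the row-space projector on the fixed-rank set (via the pseudoinverse) combined with the closed form $\tilde\Vb=\Pb\Vb_0(\Vb_0^\top\Pb\Vb_0)^{-1/2}$. Your explicit verification of $\Psi\circ\Phi=\mathrm{id}$ through $\Mb=\Mb\tilde\Vb\tilde\Vb^\top$ gives a cleaner justification of injectivity than the paper's brief appeal to uniqueness in Prop.~\ref{prop:uniqueness}.
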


Thm.~\ref{thm:matrix-case} shows that the MGP is a continuous, 
one-to-one parametrization of rank-$r$ matrices. 
Hence, the primitive map resolves the rotational degeneracy in matrix factorization and introduces 
no additional variation in the primitive. The full proof is deferred to App.~\ref{proof:matrix-case}.

We remark on the requirement $\Vb_0^\top \Mb^\top \Mb \Vb_0 \succ 0$. 
For image and video data, the right singular subspaces empirically 
concentrate around a small number of dominant directions (see Fig.~\ref{fig:procrustes_alignment}). 
Selecting $\Vb_0$ by the medoid criterion makes $\Vb_0^\top \Vb_i \Vb_i^\top \Vb_0$ 
more strongly positive definite than an arbitrary choice, either by attaining a larger minimum eigenvalue or a larger trace.

\subsection{Tensor Grassmannian Primitives (TGPs)} \label{subsec:tgps}
We extend the Grassmannian primitive construction to tensors. Consider $d$-th order tensors $\{\cX_i\}_{i=1}^N$ with each $\cX_i \in \RR^{n_1 \times \cdots \times n_d}$. The TGP is constructed in three steps parallel to the matrix case.

\noindent\textit{\underline{Step 1: Tucker decomposition.}} Apply HOOI to each $\cX_i$ to obtain
$\cX_i = \cC_i \times_1 \Ub_{1,i} \times_2 \Ub_{2,i} \cdots \times_d \Ub_{d,i}$,
where $\cC_i \in \RR^{r_1 \times \cdots \times r_d}$ is the core tensor and $\Ub_{k,i} \in \stie(n_k, r_k)$ 
is the factor matrix along mode $k$ for $k = 1, \dots, d$. Since Tucker decomposition is non-unique,
we fix only one such decomposition.

\noindent\textit{\underline{Step 2: Anchor matrix selection.}} Select anchor matrices for modes in $\cI_l = \{l, l+1, \dots, d\}$ for some $l \geq 2$. For each $k \in \cI_l$, find the medoid $\Ub_{k,0}$ from $\{\Ub_{k,i}\}_{i=1}^N$ via the medoid criterion of Sec.~\ref{subsec:mgp}.

\noindent\textit{\underline{Step 3: Primitive construction.}} For each $k \in \cI_l$ and $i$, apply OP to align $\Ub_{k,i}$ to the anchor $\Ub_{k,0}$, yielding $\tilde\Ub_{k,i} = \op(\Ub_{k,i}, \Ub_{k,0})$. The TGP is
$\Phi(\cX_i) = (\cX_i \times_l \tilde\Ub_{l,i}^\top \cdots \times_d \tilde\Ub_{d,i}^\top,\ \tilde\Ub_{l,i}, \dots, \tilde\Ub_{d,i})$.

DiffATS then trains DMs on $\{\Phi(\cX_i)\}_{i=1}^N$. 
At inference, the DM generates $(\hat\cG, \hat\Ub_l, \dots, \hat\Ub_d)$ by DDIM~\citep{ddim}, 
and the generated tensor is reconstructed as $\hat\cX = \hat\cG \times_l \hat\Ub_l \cdots \times_d \hat\Ub_d$ (see Fig.~\ref{fig:pipeline}).

\noindent\textbf{Analysis of the Tensor Grassmannian Primitive.}
The TGP construction is an intuitive extension of the MGP. However, since Tucker decomposition is multilinear (rather than bilinear, as in matrix factorization), the theoretical analysis of the TGP is more delicate. The major challenge is that tensors do not admit a single ``rank'' notion with the same stability properties as matrix rank~\citep{kolda2009tensor,schaefer2018complexity}.

To address this, we adopt the technique of \cite{zhang2018tensor}, 
which replaces tensor rank by the ranks of its mode-$k$ unfoldings. 
More specifically, for a tensor $\cX \in \RR^{n_1 \times n_2 \times \cdots \times n_d}$, we define an operator $\sk$ such that
$\sk(\cX)$ is an $n_k \times n_k$ matrix whose $(\alpha, \beta)$-th entry is $
    [\sk(\cX)]_{\alpha\beta} = \sum_{i_1 = 1}^{n_1} \cdots \sum_{i_{k-1} = 1}^{n_{k-1}} \sum_{i_{k+1} = 1}^{n_{k+1}} \cdots \sum_{i_d = 1}^{n_d} \cX_{i_1, \dots, i_{k-1}, \alpha, i_{k+1}, \dots, i_d} \cdot \cX_{i_1, \dots, i_{k-1}, \beta, i_{k+1}, \dots, i_d}$. 
We define the mode-$k$ rank of $\cX$ as $\mathrm{rank}(\sk(\cX))$.

\begin{theorem}[TGP Homeomorphism, informal]
\label{thm:tgphomeomorphism}
Fix $l \in \{1, \dots, d\}$ and mode-wise anchor matrices
$\Ub_{l,0}, \dots, \Ub_{d,0}$. 
The TGP map $\Phi(\cdot \mid \Ub_{l,0}, \dots, \Ub_{d,0})$ is a homeomorphism
between the set of tensors $\cX \in \RR^{n_1 \times \cdots \times n_d}$
such that, for every $k = l, \dots, d$,
$\mathrm{rank}(\sk(\cX)) = r_k$ and the anchor-overlap condition
$\Ub_{k,0}^\top \sk(\cX) \Ub_{k,0} \succ 0$ holds, and 
a product space of a coefficient tensor space and $(d - l + 1)$ Grassmann manifolds.
The formal statement and proof are deferred to 
App.~\ref{thm:tgphomeomorphism-formal}.
\end{theorem}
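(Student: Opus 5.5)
We would prove the theorem by induction on the number of anchored modes, in each step reducing to the matrix case, Thm.~\ref{thm:matrix-case}, applied to a mode-$k$ unfolding. The first observation is that the mode-$k$ column space of $\cX$ is determined by $\sk(\cX)$: $\sk(\cX) = \Xb_{(k)}\Xb_{(k)}^\top$, where $\Xb_{(k)}$ is the mode-$k$ unfolding. Hence, when $\mathrm{rank}(\sk(\cX)) = r_k$, the rank-$r_k$ orthogonal projector $P_k(\cX)$ onto $\mathrm{range}(\sk(\cX))$ is well defined. Moreover, any orthonormal Tucker factor $\Ub_k$ of an exact decomposition spans this range, so $\Ub_k\Ub_k^\top = P_k(\cX)$. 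This makes the TGP independent of which Tucker decomposition HOOI returns. Since $\Ub_{k,0}^\top \sk(\cX)\Ub_{k,0}\succ 0$ holds if and only if $\Ub_{k,0}^\top \Ub_k\Ub_k^\top \Ub_{k,0}\succ 0$, Prop.~\ref{prop:uniqueness} shows that $\tilde\Ub_k = \op(\Ub_k,\Ub_{k,0})$ is uniquely defined and lies in $\mathrm{gr}(n_k,r_k\mid \Ub_{k,0})$. The codomain will be $\cD \times \prod_{k=l}^d \mathrm{gr}(n_k,r_k\mid\Ub_{k,0})$, where $\cD$ is the set of coefficient tensors $\cG\in\RR^{n_1\times\cdots\times n_{l-1}\times r_l\times\cdots\times r_d}$ with $\mathrm{rank}(\mathcal S_k(\cG)) = r_k$ for $k\ge l$ (full mode-$k$ rank in the compressed modes).

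\textbf{Continuity of $\Phi$.} On the constant-rank set, $\cX\mapsto\sk(\cX)$ is polynomial. The projector onto the top-$r_k$ eigenspace of a PSD matrix is continuous wherever the gap between the $r_k$-th eigenvalue and the $(r_k{+}1)$-th (which equals $0$) is positive; this follows from Davis--Kahan or a Riesz-projector argument. Next, $P\mapsto \op(\cdot,\Ub_{k,0})$ is continuous. Writing $\Ub_{k,0}^\top\Ub_k = \Lb\bLambda\Rb^\top$, we get $\op(\Ub_k,\Ub_{k,0}) = \Ub_k\Ub_k^\top\Ub_{k,0}(\Ub_{k,0}^\top \Ub_k\Ub_k^\top\Ub_{k,0})^{-1/2} = P\Ub_{k,0}(\Ub_{k,0}^\top P\Ub_{k,0})^{-1/2}$. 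This closed form is a continuous function of $P$ on the overlap set, because the inverse square root is continuous on positive definite matrices. We would verify this formula as a lemma; it is the same identity that drives Thm.~\ref{thm:matrix-case}. The core $\cG = \cX\times_l\tilde\Ub_l^\top\cdots\times_d\tilde\Ub_d^\top$ is then continuous as a multilinear expression.

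\textbf{Bijectivity and continuous inverse.} The inverse candidate is $\Psi(\cG,\tilde\Ub_l,\dots,\tilde\Ub_d) = \cG\times_l\tilde\Ub_l\cdots\times_d\tilde\Ub_d$, which is clearly continuous. For $\Psi\circ\Phi = \mathrm{id}$, note that $\tilde\Ub_k\tilde\Ub_k^\top = P_k(\cX)$ and that $\cX = \cX\times_k P_k(\cX)$ for every $k$. For $\Phi\circ\Psi = \mathrm{id}$, let $\cX = \Psi(\cG,\dots)$. Then $\sk(\cX) = \tilde\Ub_k\,\mathcal S_k(\cG')\,\tilde\Ub_k^\top$, where $\cG'$ has the other factors applied; those factors have orthonormal columns, so $\mathcal S_k(\cG') = \mathcal S_k(\cG)$. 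This matrix has full rank $r_k$, so $\mathrm{range}(\sk(\cX)) = \mathrm{span}(\tilde\Ub_k)$ and the rank condition holds. The overlap condition follows from $\tilde\Ub_k\in\mathrm{gr}(\cdot\mid\Ub_{k,0})$ together with the congruence above. Since $\op$ depends only on the span and fixes elements of $\mathrm{gr}$, i.e., $\op(\tilde\Ub_k,\Ub_{k,0}) = \tilde\Ub_k$ by the uniqueness in Prop.~\ref{prop:uniqueness}, the factors are recovered. The core is then recovered via $\tilde\Ub_k^\top\tilde\Ub_k = I$.

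\textbf{Main obstacle.} The delicate step is getting the topology and the domain/codomain bookkeeping right. In particular, we must identify the precise coefficient space $\cD$ so that $\Psi$ maps into the domain, and we must show that $\op$ restricted to $\mathrm{gr}$ is continuous with the subspace topology inherited from $\mathrm{Gr}(n_k,r_k)$. This rests on the closed form $P\Ub_{k,0}(\Ub_{k,0}^\top P\Ub_{k,0})^{-1/2}$ and on the fact that $\tilde\Ub\mapsto\tilde\Ub\tilde\Ub^\top$ inverts it continuously, so that $\mathrm{gr}$ with the Frobenius topology is homeomorphic to an open subset of $\mathrm{Gr}$. We would establish this closed-form lemma first, and then reuse it verbatim for each mode.
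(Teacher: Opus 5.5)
Your proposal is correct and follows essentially the same route as the paper. Continuity comes from the constant-rank projector onto the mode-$k$ subspace composed with the closed form $P\Ub_{k,0}(\Ub_{k,0}^\top P\Ub_{k,0})^{-1/2}$ from Prop.~\ref{prop:uniqueness}. Surjectivity uses the fact that the mode-$k$ projector of $\cG\times_l\tilde\Ub_l\cdots\times_d\tilde\Ub_d$ is $\tilde\Ub_k\tilde\Ub_k^\top$ regardless of the other factors, and the inverse is the continuous multilinear reconstruction. Your version is more careful than the paper's in two places: you check $\Psi\circ\Phi=\mathrm{id}$ explicitly (the paper infers injectivity from uniqueness of $\tilde\Ub_k$, which really only gives well-definedness), and you verify that the reconstruction lands in the rank- and overlap-constrained domain.
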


Thm.~\ref{thm:tgphomeomorphism} justifies the construction of TGP as stable parametrizations 
for anchored low-multilinear-rank tensors. 
As long as the factor matrices of $\cX$ have nontrivial overlap with the anchor matrix, 
the map $\Phi$ is a continuous bijective function. 
The bijectivity of $\Phi$ ensures that the TGP retains the representation power 
of Tucker decomposition while removing the rotational ambiguity. 
The continuity of $\Phi$ further ensures that the primitive distribution 
$p(\Phi(\cX))$ inherits the local geometry of the data distribution $p(\cX)$, 
so a diffusion model trained to match the score $\nabla \log p_t(\Phi(\cX))$ 
in the primitive space effectively characterizes $p(\cX)$ through inverse mapping.


\section{Experiments}
In this section, we evaluate DiffATS on CelebA-HQ~\cite{karras2017progressive}, 
Moving MNIST~\cite{srivastava2015unsupervised}, 
and four PDE datasets: 1-d Burgers, 1-d reaction-diffusion, 2-d Burgers, 
and 2-d K\'arm\'an vortex street. See App.~\ref{app:additional_results} 
for generated samples across all baselines and datasets.

We apply MGPs to matrix-structured data (CelebA-HQ images and 1-d PDEs) 
and TGPs to tensor-structured data (Moving MNIST videos and 2-d PDEs). 
For each dataset, we choose the rank to keep the average relative $\ell_2$ reconstruction error 
smaller than a threshold. 
For PDE data, where physical fidelity matters, 
we select the threshold from $\sim10^{-6}$ to $\sim 10^{-3}$ (see Tab.~\ref{tab:reconstruction_error}). 
The resulting compression factors range from $3.9\times$ to $210\times$ across datasets 
(see Tab.~\ref{tab:compression_ratio}), and the diffusion model is trained directly 
on the compressed MGPs and TGPs. 
The image and video experiments are unconditional. The PDE experiments 
are conditional: given the initial frame, the model generates the subsequent trajectory. Our code is available at
\url{https://github.com/JinhuaLyu/DiffATS}.

\begin{wraptable}{r}{0.35\textwidth}
\centering
\vspace{-15pt}
\setlength{\tabcolsep}{-2pt}
\small
\begin{tabular}{l r}
\toprule
Dataset & Compression ratio \\
\midrule
CelebA-HQ              & $16.00$ \\
Moving MNIST           & $3.94$ \\
1-d Burgers            & $6.67$ \\
1-d reaction-diffusion & $13.33$ \\
2-d Burgers            & $210.31$ \\
2-d K\'arm\'an         & $137.41$ \\
\bottomrule
\end{tabular}
\vspace{-2pt}
\caption{\textbf{Per-sample compression ratio across the six benchmark datasets used by DiffATS.}}
\label{tab:compression_ratio}
\vspace{-15pt}
\end{wraptable}

We compare DiffATS with four baselines:
(i) DCTDiff~\cite{ning2024dctdiff}, which trains DMs in the DCT frequency space;
(ii) SDIFT~\cite{chen2025generating}, which trains DMs on Tucker cores with shared factor bases;
(iii) FNO~\cite{li2020fourier}, a neural operator for learning solution maps of parametric PDEs; and
(iv) AvgPool, which trains DMs on spatially downsampled data.
Note that we focus on generative baselines that compress information using lower-dimensional representations \emph{without autoencoders}.
We further conduct ablation studies on CelebA-HQ to evaluate our low-rank primitives and the OP alignment.

\noindent\textbf{Settings.}
For each dataset, we use the same training epochs and batch size across all methods, 
with $250$ sampling steps. 
All experiments use a DiT backbone~\cite{peebles2023scalable} 
(our method is compatible with other DM architectures). 
To compare representation choices fairly, we match the per-sample compression ratio across DCTDiff, SDIFT, AvgPool, and DiffATS, so that all four methods operate on representations of comparable dimensionality. FNO is the only exception: as a neural operator, it operates directly in the original data space and does not admit an analogous compression step. Full experimental details are provided in App.~\ref{app:exp_settings}.

\subsection{Image and Video Synthesis}
Image and video experiments provide visually interpretable tests of representation quality and allow comparison with established generative metrics.

\noindent \textbf{Datasets.}
The CelebA-HQ dataset~\cite{karras2017progressive} 
consists of $30{,}000$ images of celebrity faces at a resolution of $1024 \times 1024$. 
We apply $32 \times 32$ channel-wise patchification (see App.~\ref{app:patchification}). 
Patchification consistently reduces the SVD truncation error at the same rank on CelebA, as validated by the ablation in Tab.~\ref{tab:svd_patchify_ablation}.
The Moving MNIST dataset~\cite{srivastava2015unsupervised} 
consists of $20{,}000$ video clips of moving handwritten digits, 
each with $20$ frames at $64 \times 64$ resolution. We follow the DiffATS pipeline 
in Fig.~\ref{fig:pipeline} directly.

\noindent \textbf{Metrics.}
For unconditional image synthesis, we report the Fr\'echet Inception Distance (FID)~\cite{heusel2017gans} 
with both Inception-V3~\cite{szegedy2016rethinking} and DINOv2~\cite{oquab2023dinov2} 
embeddings; the latter follows~\cite{stein2023exposing} for a less biased evaluation. 
For video generation, we report the standard Fr\'echet Video Distance (FVD)~\cite{unterthiner2018towards}.

\noindent\textbf{Results.}
Generated samples are shown in Fig.~\ref{fig:celeba_samples} (App.~\ref{app:additional_results}). Tab.~\ref{tab:celeba} shows that DiffATS attains the best FID and FVD among autoencoder-free methods, validating the representational benefits of MGPs and TGPs. DiffATS does not aim to match state-of-the-art image and video synthesis models that rely on large pretrained autoencoders. Instead, the target is on spatiotemporal physical fields, where such autoencoders are typically unavailable and low-rank structure is more pronounced.

\noindent \textbf{Ablation study.} We further ablate the key design choices of DiffATS: 
data-dependent Tucker bases, OP alignment, and factor-space augmentation. 
The full DiffATS uses data-dependent bases with OP alignment and without augmentation. 
``Shared bases'' uses global Tucker bases estimated from the training set and 
trains only the projected coefficients. ``Data-dep. bases w/ aug.'' 
applies factor-space augmentation to the data-dependent bases. ``Data-dep. bases w/o align.'' 
trains on unaligned data-dependent Tucker factors without augmentation. 
As shown in Tab.~\ref{tab:celeba}, data-dependent bases with OP alignment notably 
improve generation quality, while factor-space augmentation severely degrades it; 
the full DiffATS attains the lowest FID under both Inception-V3 and DINOv2 evaluations.

\begin{table}[t]
\centering
\setlength{\tabcolsep}{3pt}
\small
\resizebox{\textwidth}{!}{%
\begin{tabular}{l cc c}
\toprule
& \multicolumn{2}{c}{CelebA-HQ ($1024\times 1024$, image)} & Moving MNIST ($64\times 64$, video) \\
\cmidrule(lr){2-3} \cmidrule(lr){4-4}
Method & FID (Inception-V3) $\downarrow$ & FID (DINOv2) $\downarrow$ & FVD $\downarrow$ \\
\midrule
DCTDiff~\cite{ning2024dctdiff}              & 52.22             & 787.11            & 555.36 \\
SDIFT~\cite{chen2025generating}             & 309.50            & 2455.57                & 1183.69 \\
AvgPool                                     & 32.89             & 559.67            & 487.82 \\
DiffATS (ours)                              & $\mathbf{23.57}$  & $\mathbf{381.58}$ & $\mathbf{415.42}$ \\
\midrule
\multicolumn{4}{l}{\emph{Ablations of DiffATS (CelebA-HQ only)}} \\
Shared bases                  & $24.23$  & $455.11$  & \\
Data-dep. bases w/ aug.       & $339.01$ & $2550.36$ & \\
Data-dep. bases w/o align.    & $76.94$  & $1175.78$ & \\
\bottomrule
\end{tabular}%
}
\vspace{6pt}
\caption{\textbf{Generation quality on image and video benchmarks.} Both datasets use
10{,}000 generated samples and 10{,}000 real samples for evaluation.
The bottom block compares alternative factor-space designs of DiffATS on CelebA-HQ for ablation.}
\label{tab:celeba}
\label{tab:mmnist_main}
\end{table}

\subsection{PDE Trajectory Prediction}
The merits of DiffATS shine on parametric PDE trajectory synthesis. Many PDE solutions admit accurate low-rank approximations in smooth regimes~\citep{guo2025interpolating,guo2025tensor}, 
making them a natural fit for low-rank primitive-based diffusion modeling. In this section, we consider the task of solution synthesis, where we train DMs to generate PDE solutions from initial conditions.

\noindent \textbf{Datasets.}
We use four PDE datasets. 
The matrix datasets (1-d Burgers and 1-d reaction-diffusion) 
have size $1024 \times 201$, and the tensor datasets (2-d Burgers and 2-d K\'arm\'an vortex street) 
have size $128 \times 128 \times 201$. 
Each trajectory consists of $201$ frames: 
the first is the initial condition, 
and the remaining $200$ are the target trajectory. 
Each dataset contains $10{,}000$ training and $500$ test trajectories.

\noindent \textbf{PDE with MGPs.}
For matrix data, 
we split each trajectory into the initial frame $\Mb_0 \in \RR^{n_1}$ (the condition) 
and the subsequent frames $\Mb_{1:T} \in \RR^{n_1 \times T}$ (the target). 
We construct an MGP $\Phi(\Mb_{1:T})$ following Sec.~\ref{subsec:mgp}, 
anchoring the left singular vectors with $\Ub_0 \in \RR^{n_1 \times r}$. 
The initial frame $\Mb_0$ is encoded as $\Phi(\Mb_0)$ using the same anchor. 
A conditional diffusion model~\cite{ho2020denoising, ho2022cascaded} 
is trained on $p_\theta\bigl(\Phi(\Mb_{1:T}) \mid \Phi(\Mb_0)\bigr)$, 
with $\Phi(\Mb_0)$ injected into the denoising network as conditional inputs.

\noindent \textbf{PDE with TGPs.}
Similar to the 1-d case, we split each trajectory $\cX \in \RR^{n_1 \times n_2 \times (T+1)}$ 
into the condition $\Xb_0 \in \RR^{n_1 \times n_2}$ and the target $\cX_{1:T} \in \RR^{n_1 \times n_2 \times T}$. 
We construct the TGP $\Phi(\cX_{1:T}) = (\tilde{\mathcal{G}}, \tilde{\Ub}_2, \tilde{\Ub}_3)$ 
following Sec.~\ref{subsec:tgps}, anchoring the mode-2 and mode-3 factor matrices 
with $(\Ub_{2,0}, \Ub_{3,0})$. The initial frame $\Xb_0$ is encoded as 
$\Phi(\Xb_0) = (\tilde{\Ub}_0, \tilde{\Vb}_0)$ via MGP, anchoring its right singular vectors with 
the same $\Ub_{2,0}$ so that condition and target share a consistent factor representation.

\noindent \textbf{Results.}
DiffATS outperforms all baselines in predictive metrics by a large margin on
all four PDE benchmarks (Tab.~\ref{tab:pde_main}),
reducing the relative $\ell_2$ error by roughly $9\times$ 
over the best baseline on 1-d reaction-diffusion. 
The improvement compared with SDIFT and DCTDiff suggests that
data-dependent Tucker bases used in DiffATS adapt to each trajectory's spatial patterns more flexibly.

\begin{table}[t]
\centering
\setlength{\tabcolsep}{6pt}
\small
\begin{tabular}{l cc cc}
\toprule
& \multicolumn{2}{c}{1-d} & \multicolumn{2}{c}{2-d} \\
\cmidrule(lr){2-3} \cmidrule(lr){4-5}
Method & Burgers & Reaction-Diffusion & Burgers & K\'arm\'an \\
\midrule
DCTDiff~\cite{ning2024dctdiff}    & $0.2953 \pm 0.0100$          & $0.0713 \pm 0.0001$          & $0.3241 \pm 0.0048$          & $0.5550 \pm 0.0710$ \\
SDIFT~\cite{chen2025generating}   & $0.3884 \pm 0.2600$          & $0.3406 \pm 0.0069$          & $0.7323 \pm 0.2030$          & $0.8654 \pm 0.1200$ \\
FNO~\cite{li2020fourier}          & $0.1913 \pm 0.1800$          & $0.3114 \pm 0.0069$          & $0.3423 \pm 0.3000$          & $0.3442 \pm 0.0190$ \\
AvgPool                           & $0.6602 \pm 0.0120$          & $0.3975 \pm 0.0038$          & $0.3747 \pm 0.0029$          & $0.6161 \pm 0.0020$ \\
DiffATS (ours)                    & $\mathbf{0.1203 \pm 0.0001}$ & $\mathbf{0.0085 \pm 0.0000}$ & $\mathbf{0.1845 \pm 0.0005}$ & $\mathbf{0.2233 \pm 0.0005}$ \\
\bottomrule
\end{tabular}
\vspace{6pt}
\caption{\textbf{Generation quality on PDE datasets.} Relative $\ell_2$ error between generated and ground-truth trajectories, averaged over $5$ random seeds. Full results with $\mathrm{Rel.~Err.}_1$ and $\mathrm{Avg.~rMSE}$ are in App.~\ref{app:pde_full_metrics} (Tab.~\ref{tab:pde_1d_full} and Tab.~\ref{tab:pde_2d_full}).}
\label{tab:pde_main}
\end{table}

\section{Conclusion and Discussion}\label{sec:conclusion}

We presented DiffATS, a diffusion framework on OP-aligned low-rank primitives 
(MGPs and TGPs) that resolves Tucker's rotational ambiguity and outperforms 
chosen autoencoder-free baselines on unconditional and conditional generative tasks. Future research directions include multi-anchor OP alignment and dynamic rank selection to relax the assumptions in Thm.~\ref{thm:matrix-case} and~\ref{thm:tgphomeomorphism}.



\newpage
\bibliographystyle{apalike}
\bibliography{ref}

\newpage
\appendix
\section{Full Related Works}

\noindent\textbf{Autoencoder-free Diffusion Models.}
Among autoencoder-free approaches, DCTDiff~\cite{ning2024dctdiff} maps images to the DCT frequency domain and 
discards the highest-frequency coefficients as perceptually negligible, training a DM directly on the 
retained ones. SDIFT~\cite{chen2025generating} and Tucker-based diffusion~\cite{guo2026tucker} 
apply Tucker (or functional Tucker) decomposition with factor matrices shared across the dataset, 
training DMs only on the per-sample core tensor. In contrast, our method uses per-sample 
Tucker factors aligned to a medoid anchor via OP, and trains 
the DM jointly on the core tensor and the aligned factor matrices.

\noindent\textbf{Symmetry Structures in Diffusion Models.}
The factor matrices in tensor decompositions such as SVD~\cite{eckart1936approximation} and Tucker~\cite{tucker1966some} 
are determined only up to an orthogonal transformation. 
Similar group symmetries arise in molecular graph 
generation~\cite{hoogeboom2022equivariant, xu2022geodiff, igashov2024equivariant, abramson2024accurate, xuquotient, zhou2026rethinking}, 
where a rotated molecule or a re-indexed graph represents the same physical entity. 
Existing approaches to group symmetry fall into two categories: 
(i) augmenting the target distribution in the original space by assigning equal probability to all elements in an invariance class 
through data augmentation~\cite{abramson2024accurate} or modeling score function in the symmetry manifold~\cite{xu2022geodiff}; 
(ii) removing the redundancy by collapsing each equivalence class to a single element, 
either through a quotient space~\cite{xuquotient} or canonicalization to a unique representative~\cite{zhou2026rethinking}.

DiffATS follows the philosophy from the second category, as it resolves the usual difficulty of describing equivalence classes by Grassmannian primitives. 
\cite{xuquotient} formulates a quotient space diffusion model that projects the diffusion process 
onto the quotient space and lifts it back to the original space.
This requires computing geodesics and the exponential map in the Grassmann manifold~\cite{edelman1998geometry},
which is non-trivial and involves additional computation in the DM training and inference steps~\citep{de2022riemannian}. We instead keep the diffusion in the ambient space and select a 
canonical representative through OP alignment of factor matrices to a medoid anchor.
A concurrent work~\cite{zhou2026rethinking} similarly pursues canonicalization in $\mathrm{SO}(3)$ for molecule generation. In contrast, we study the geometric structure in the Grassmann manifold.

\noindent\textbf{Orthogonal Procrustes.} 
OP alignment has been widely used in the theoretical
analysis of nonconvex low-rank factorization, where it resolves the rotational
non-identifiability of matrix factors by comparing estimates up to an orthogonal
transformation~\cite{tu2016low, zhang2018fast, ma2024convergence}.
These works use Procrustes alignment as an analytical tool for proving optimization and recovery guarantees. 
In contrast, we use it to construct canonicalized factor representations for diffusion model training.

\section{Proofs}
\subsection{Proof of Proposition~\ref{prop:uniqueness}}

\begin{proof}
    Since $\Vb \Vb^\top$ is the $n \times n$ orthogonal projection onto $\mathrm{span}(\Vb)$, the constraint $\tilde\Vb \tilde\Vb^\top = \Vb \Vb^\top$ forces $\tilde\Vb$ to have orthonormal columns spanning $\mathrm{span}(\Vb)$. Hence
    \[
        \{\tilde\Vb \in \RR^{n \times r} : \tilde\Vb \tilde\Vb^\top = \Vb \Vb^\top\}
        = \{\Vb \Qb : \Qb \in \mathrm{O}(r)\},
    \]
    and the constrained problem reduces to the OP problem
    \[
        \min_{\Qb \in \mathrm{O}(r)} \|\Vb_0 - \Vb \Qb\|_F^2.
    \]
    Since $\tilde \Vb=\Vb \Qb$ gives a one-to-one correspondence between $\tilde \Vb$ and $\Qb$, we only need the uniqueness of the optimal solution $\Qb^*$ to the OP problem.

    Expanding $\|\Vb_0 - \Vb \Qb\|_F^2$ and using $\|\Vb \Qb\|_F^2 = \|\Vb_0\|_F^2 = r$, then
    \[
        \|\Vb_0 - \Vb \Qb\|_F^2 = 2r - 2\mathrm{Tr}(\Vb_0^\top \Vb \Qb),
    \]
    the OP problem is equivalent to maximizing $\mathrm{Tr}(\Vb_0^\top \Vb \Qb)$ over $\Qb \in \mathrm{O}(r)$. Let $\Vb^\top \Vb_0 = \Lb \bLambda \Rb^\top$ be a compact SVD with $\bLambda = \mathrm{diag}(\lambda_1, \dots, \lambda_r)$, $\lambda_1 \geq \cdots \geq \lambda_r > 0$, and $\Lb,\Rb\in\mathrm O(r)$. Setting $\Mb = \Lb^\top \Qb \Rb \in \mathrm{O}(r)$,
    \[
        \mathrm{Tr}(\Vb_0^\top \Vb \Qb) = \mathrm{Tr}(\Rb \bLambda \Lb^\top \Qb) = \mathrm{Tr}(\bLambda \Mb) = \sum_{i=1}^r \lambda_i M_{ii}.
    \]
    Since $|M_{ii}| \leq 1$ for all $\Mb \in \mathrm{O}(r)$, the trace has a maximum $\sum_i \lambda_i$, achieved by $\Mb = I_r$, i.e., $\Qb^\star = \Lb \Rb^\top$~\cite{schonemann1966generalized}. This gives $\Vb \Qb^\star$ as a minimizer of the original problem.

    For the uniqueness, we need to prove that when the SVD $\Vb^\top\Vb_0=\Lb\bSigma \Rb^\top$ cannot uniquely determine the tuple $(\Lb,\bSigma,\Rb)$, the product $\Lb\Rb^\top$ is still uniquely determined. In fact, 
    \begin{align}\label{eq:Qoptimal}
        \Lb\Rb^\top=\Lb\bLambda \Rb^\top(\Rb\bLambda^{-1}\Rb^\top)=\Vb^\top\Vb_0(\Vb_0^\top\Vb\Vb^\top\Vb_0)^{-\frac12},
    \end{align}
    where we use $S^{-\frac12}$ for a positive definite matrix $S$ to denote the inverse of its square root, the unique positive definite matrix $T$ such that $T^2=S$.
    
\end{proof}

\subsection{Proof of Proposition~\ref{prop:procrustes_reduction}}
\begin{proof}
For any \(\Qb\in\mathrm{O}(r)\), since
\(\Vb_0^\top \Vb_0=\Vb^\top \Vb=I_r\), we have
\[
\|\Vb_0-\Vb\Qb\|_F^2
=
\operatorname{tr}(\Vb_0^\top \Vb_0)
+
\operatorname{tr}(\Qb^\top \Vb^\top \Vb \Qb)
-
2\operatorname{tr}(\Vb_0^\top \Vb\Qb).
\]
Hence
\[
\|\Vb_0-\Vb\Qb\|_F^2
=
2r
-
2\operatorname{tr}(\Vb_0^\top \Vb\Qb).
\]
Maximizing the last trace over \(\Qb\in\mathrm{O}(r)\) gives the
OP solution. If
$\Vb_0^\top \Vb = \Ub \bSigma \Wb^\top$
is a singular value decomposition, then one maximizer is
\[
\Qb^\star = \Wb \Ub^\top,
\]
and
\[
\max_{\Qb\in\mathrm{O}(r)}
\operatorname{tr}(\Vb_0^\top \Vb\Qb)
=
\|\Vb_0^\top \Vb\|_*.
\]
Therefore,
\[
\|\Vb_0-\Vb\Qb^\star\|_F^2
=
2r-2\|\Vb_0^\top \Vb\|_*.
\]

It remains to compute the asymptotic value of
\(\|\Vb_0^\top \Vb\|_*/r\). Let
\[
\Pb_0=\Vb_0\Vb_0^\top,
\quad
\Pb=\Vb\Vb^\top
\]
be the two random rank-\(r\) orthogonal projectors in \(\mathbb{R}^p\).
The nonzero eigenvalues of
\[
\Pb_0\Pb\Pb_0
=
\Vb_0\Vb_0^\top \Vb\Vb^\top \Vb_0\Vb_0^\top
\]
coincide with the eigenvalues of
\[
\Vb_0^\top \Vb\Vb^\top \Vb_0
=
(\Vb_0^\top \Vb)(\Vb_0^\top \Vb)^\top.
\]
Consequently, if \(\mu_p\) denotes the empirical spectral distribution of
\(\Pb_0\Pb\Pb_0\), normalized by \(p\), then
\[
\frac{1}{p}\|\Vb_0^\top \Vb\|_*
=
\int_0^\infty \sqrt{x}d\mu_p(x).
\]

By Wachter's MANOVA law~\cite{kunisky2023generic} with parameters
\[
\alpha=\beta=\frac{1}{c},
\]
the empirical spectral distribution \(\mu_p\) converges weakly to
\[
d\mu(x)
=
\frac{\sqrt{x(b-x)}}{2\pi x(1-x)}
\mathbf{1}_{[0,b]}(x)dx
+
\max\left\{\frac{2}{c}-1,0\right\}\delta_1(x)
+
\left(1-\frac{1}{c}\right)\delta_0(x),
\]
where
\[
b
=
4\frac{1}{c}\left(1-\frac{1}{c}\right)
=
\frac{4(c-1)}{c^2}.
\]
Since \(x\mapsto \sqrt{x}\) is continuous and bounded on \([0,1]\), we obtain
\[
\frac{1}{p}\|\Vb_0^\top \Vb\|_*
\to
\int_0^\infty \sqrt{x}d\mu(x).
\]
The limiting integral equals
\[
\int_0^\infty \sqrt{x}d\mu(x)
=
\int_0^b
\frac{\sqrt{b-x}}{2\pi(1-x)}dx
+
\max\left\{\frac{2}{c}-1,0\right\}.
\]
A direct calculation gives
\[
\int_0^b
\frac{\sqrt{b-x}}{2\pi(1-x)}dx
=
\frac{1}{\pi}
\left(
\sqrt b
-
\sqrt{1-b}
\arctan\sqrt{\frac{b}{1-b}}
\right).
\]
Thus
\[
\frac{1}{p}\|\Vb_0^\top \Vb\|_*
\to
\frac{1}{\pi}
\left(
\sqrt b
-
\sqrt{1-b}
\arctan\sqrt{\frac{b}{1-b}}
\right)
+
\max\left\{\frac{2}{c}-1,0\right\}.
\]
Since \(p/r\to c\), it follows that
\[
\frac{1}{r}\|\Vb_0^\top \Vb\|_*
\to
\frac{c}{\pi}
\left(
\sqrt b
-
\sqrt{1-b}
\arctan\sqrt{\frac{b}{1-b}}
\right)
+
c\max\left\{\frac{2}{c}-1,0\right\}.
\]
Equivalently,
\[
\frac{1}{r}\|\Vb_0^\top \Vb\|_*
\to
\ell(c),
\]
where
\[
\ell(c)
=
\frac{c}{\pi}
\left(
\sqrt b
-
\sqrt{1-b}
\arctan\sqrt{\frac{b}{1-b}}
\right)
+
\max\{2-c,0\}.
\]
Combining this with
\[
\|\Vb_0-\Vb\Qb^\star\|_F^2
=
2r-2\|\Vb_0^\top \Vb\|_*
\]
yields
\[
\frac{1}{r}
\|\Vb_0-\Vb\Qb^\star\|_F^2
\to
2(1-\ell(c)).
\]
Equivalently,
\[
\|\Vb_0-\Vb\Qb^\star\|_F^2
=
2r(1-\ell(c))+o(r).
\]

Finally, we verify that \(\ell(c)\in(0,1)\). Since the singular values of
\(\Vb_0^\top\Vb\) lie in \([0,1]\), for every \(p,r\),
\[
0
\le
\frac{1}{r}\|\Vb_0^\top\Vb\|_*
\le
1.
\]
Passing to the limit gives \(0\le \ell(c)\le 1\). Moreover, for any
fixed \(c>1\), the Wachter limiting measure has nonzero continuous mass
on \((0,b)\), where \(b=4(c-1)/c^2>0\). Hence
\[
\ell(c)
=
c\int_0^\infty \sqrt{x}d\mu(x)
>
0.
\]
The inequality \(\ell(c)<1\) follows because the limiting measure is not
concentrated at \(x=1\) when \(c>1\). Thus
\[
0<\ell(c)<1.
\]
This proves the claim.

\end{proof}
\subsection{Proof of Theorem~\ref{thm:matrix-case}}\label{proof:matrix-case}
\begin{proof}
    As we have the uniqueness from Prop.~\ref{prop:uniqueness} of the $\tilde \Vb$ part, $\Phi(\cdot\mid\Vb_0)$ is injective.
    
    For surjectivity, we claim that each pair $(\Mb,\Vb)\in \mathbb R_*^{n_1\times r}\otimes\mathrm{gr}(n_2,r\mid\Vb_0)$ is the image of the matrix $\Mb \Vb^\top$. Given that $\Mb$ has full rank, the projection matrix of the row subspace of $\Mb \Vb^\top$ is just $\Vb\Vb^\top$, then $\Vb\in\mathrm{gr}(n_2,r\mid\Vb_0)$ implies that the second part of $\Phi(\Mb\Vb^\top)$ is $\Vb$ itself, so $\Phi(\cdot\mid\Vb_0)$ is surjective and thus a bijection.

    The inverse map of $\Phi(\cdot\mid\Vb_0)$ is just a product map and is  continuous.
    So we only need to prove the continuity of the map $\Mb\to \tilde \Vb$, as this also implies the continuity of the map $\Mb\to \Mb \tilde \Vb$. 
    
    Since $\Mb$ has fixed rank, the row subspace of $\Mb$ continuously depends on $\Mb$~\cite{davis1970rotation}, so is the projection matrix $\Vb\Vb^\top$. 
    This can also be understood by the fact that the Moore-Penrose inverse $S^\dagger$ of a matrix $S$ is continuous if $S$ has fixed rank, and that the projection matrix can be expressed as $\Vb\Vb^\top=\Mb^\dagger \Mb$. Then we just need the continuity of 
    \begin{align}
       \tilde \Vb=\arg\min_{\tilde \Vb:\tilde \Vb\tilde \Vb^\top=\Vb\Vb^\top}\|\Vb_0-\tilde \Vb\|_F^2
    \end{align}
    with respect to the factor $\Vb\Vb^\top$.

    As $\tilde \Vb=\Vb \Qb^*$ and $\Qb^*$ can be expressed as~\eqref{eq:Qoptimal}, the unique solution $\tilde \Vb$ can be expressed as
    \begin{align}
        \tilde \Vb=\Vb\Vb^\top\Vb_0(\Vb_0^\top\Vb\Vb^\top\Vb_0)^{-\frac12},
    \end{align}
    the continuity of $\tilde \Vb$ with respect to the projection matrix $\Vb\Vb^\top$ is then clear.
\end{proof}
\subsection{Proof of Theorem~\ref{thm:tgphomeomorphism-formal}}

We first state the formal version of the TGP homeomorphism result (informal version: Thm.~\ref{thm:tgphomeomorphism} in the main text).

\begin{theorem}[TGP Homeomorphism, formal]
\label{thm:tgphomeomorphism-formal}
    For any $l=1,...,d$, let $\cI_l=\{l,l+1,\cdots,d\}$. Given anchor matrices $\Ub_{k,0}\in\stie(n_k,r_k)$ for $k\in\cI_l$, let
    \begin{align}
    &\mathcal M(n_1,...,n_{l-1},(n_{l},r_{l}\mid\Ub_{l,0}),...,(n_d,r_d\mid\Ub_{d,0}))\nonumber\\
        =&\{\mathcal X\in\mathbb R^{n_1\times\cdots\times n_d}:\mathrm{rank}(\sk(\mathcal X))=r_k,\Ub_{k,0}^\top\sk(\mathcal X)\Ub_{k,0}\succ 0,k\in\cI_l\}
    \end{align}
    be the set of tensors whose mode-$k$ unfolding has rank $r_k$ and the mode-$k$ subspace contains no direction that is perpendicular to the anchor matrix $\Ub_{k,0}$, for each $k\in\cI_l$.

    We also abbreviate
    \begin{align}
        \mathcal M(n_1,...,n_{l-1},(r_{l})_*,...,(r_d)_*)
        :=\mathcal M(n_1,...,n_{l-1},(r_{l},r_{l}\mid\Ub_{l,0}),...,(r_d,r_d\mid\Ub_{d,0})),
    \end{align}
    which is the set of tensors whose mode-$k$ unfolding has full rank $r_k$.

    Then the following map is a homeomorphism,
    \begin{align}
\Phi(\cdot\mid\Ub_{l,0},...,\Ub_{d,0})&:\mathcal M(n_1,...,n_{l-1},(n_{l},r_{l}\mid\Ub_{l,0}),...,(n_d,r_d\mid\Ub_{d,0}))\nonumber\\
        &\mapsto\mathcal M(n_1,...,n_{l-1},(r_{l})_*,...,(r_d)_*)\otimes\mathrm{gr}(n_{l},r_{l}\mid\Ub_{l,0})\otimes \cdots\otimes\mathrm{gr}(n_{d},r_{d}\mid\Ub_{d,0}),\nonumber\\
        \mathcal X&\mapsto(\mathcal X\times_{l}\tilde \Ub_{l}\cdots\times_d\tilde \Ub_d,\tilde \Ub_{l},...,\tilde \Ub_d),
    \end{align}
    where $\tilde \Ub_k= \op(\Ub_k,\Ub_{k,0})$ and $\Ub_k$ is the top-$r_k$ singular vectors of $\sk(\cX)$ for $k=l,...,d$.
\end{theorem}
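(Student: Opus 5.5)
The plan is to prove the theorem by reducing it, mode by mode, to the matrix argument behind Thm.~\ref{thm:matrix-case}. Throughout I read the core component as $\cG=\cX\times_l\tilde\Ub_l^\top\cdots\times_d\tilde\Ub_d^\top$, consistent with Step~3 of Sec.~\ref{subsec:tgps}. I will construct an explicit candidate inverse $\Psi(\cG,\Vb_l,\dots,\Vb_d)=\cG\times_l\Vb_l\cdots\times_d\Vb_d$ and then check four things: $\Phi$ lands in the stated product space, $\Psi$ lands in the stated domain, both compositions are identities, and both maps are continuous. Continuity of $\Psi$ is immediate, since $\Psi$ is multilinear in its arguments.

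\emph{Mode-wise projections.} The basic tool is the mode-$k$ unfolding $\Xb_{(k)}\in\RR^{n_k\times\prod_{j\neq k}n_j}$, which satisfies $\sk(\cX)=\Xb_{(k)}\Xb_{(k)}^\top$. Consequently $\mathrm{range}(\sk(\cX))=\mathrm{range}(\Xb_{(k)})$. Let $\Pb_k=\Ub_k\Ub_k^\top$ be the projection onto this range. Then $\Pb_k=\sk(\cX)\sk(\cX)^\dagger$, and this depends continuously on $\cX$ because the rank of $\sk(\cX)$ is fixed at $r_k$ (the Moore--Penrose argument used in the proof of Thm.~\ref{thm:matrix-case}). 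Since $\sk(\cX)$ and $\Pb_k$ have the same range, the anchor condition $\Ub_{k,0}^\top\sk(\cX)\Ub_{k,0}\succ0$ is equivalent to $\Ub_{k,0}^\top\Pb_k\Ub_{k,0}\succ0$. Prop.~\ref{prop:uniqueness} together with~\eqref{eq:Qoptimal} then gives
\[
\tilde\Ub_k=\Pb_k\Ub_{k,0}\bigl(\Ub_{k,0}^\top\Pb_k\Ub_{k,0}\bigr)^{-1/2}.
\]
This is well defined and continuous in $\Pb_k$, hence continuous in $\cX$. It follows that $\cG$ is continuous in $\cX$ as well.

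\emph{$\Phi$ lands in the target.} By construction $\tilde\Ub_k\in\mathrm{gr}(n_k,r_k\mid\Ub_{k,0})$. The real task is to show that $\cG$ has full mode-$k$ rank $r_k$ for every $k\in\cI_l$; I expect this to be the main obstacle, because the modes interact. The approach is to use the unfolding identity
\[
\Gb_{(k)}=\tilde\Ub_k^\top\Xb_{(k)}\Bigl(\textstyle\bigotimes_{j\neq k}\Wb_j\Bigr),
\]
where $\Wb_j=\tilde\Ub_j$ for $j\in\cI_l$ and $\Wb_j=\Ib$ for $j<l$. Since $\cX=\cX\times_j\Pb_j$ for every $j$, we have $\Xb_{(k)}\bigl(\bigotimes_{j\neq k}\Pb_j\bigr)=\Xb_{(k)}$ (with $\Pb_j=\Ib$ for $j<l$). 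Combining this with $\Wb_j\Wb_j^\top=\Pb_j$ yields
\[
\sk(\cG)=\Gb_{(k)}\Gb_{(k)}^\top=\tilde\Ub_k^\top\sk(\cX)\tilde\Ub_k.
\]
Because $\tilde\Ub_k$ is an orthonormal basis of $\mathrm{range}(\sk(\cX))$, this $r_k\times r_k$ matrix is positive definite, so $\mathrm{rank}(\sk(\cG))=r_k$.

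\emph{$\Psi$ lands in the domain, and the compositions are identities.} Take $\cX=\Psi(\cG,\Vb_l,\dots,\Vb_d)$. The same computation run in reverse gives $\sk(\cX)=\Vb_k\,\sk(\cG)\,\Vb_k^\top$. Since $\sk(\cG)$ has full rank $r_k$, $\sk(\cX)$ has rank $r_k$ and range $\mathrm{span}(\Vb_k)$. The anchor condition holds because $\Vb_k\in\mathrm{gr}(n_k,r_k\mid\Ub_{k,0})$, which means $\Ub_{k,0}^\top\Vb_k\Vb_k^\top\Ub_{k,0}\succ0$. Moreover $\op(\Vb_k,\Ub_{k,0})=\Vb_k$, because $\op(\cdot,\Ub_{k,0})$ depends only on the span and fixes elements of $\mathrm{gr}$. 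Hence $\Phi(\Psi(\cG,\Vb_l,\dots,\Vb_d))$ recovers $\bigl(\cG\times_k\Vb_k^\top\Vb_k,\ldots\bigr)=(\cG,\Vb_l,\dots,\Vb_d)$. In the other direction,
\[
\Psi(\Phi(\cX))=\cX\times_l\Pb_l\cdots\times_d\Pb_d=\cX.
\]
So $\Phi$ is a continuous bijection with continuous inverse $\Psi$, which completes the argument.
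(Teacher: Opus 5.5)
Your proposal is correct and follows essentially the same route as the paper's proof. Both reduce each anchored mode to the matrix argument of Thm.~\ref{thm:matrix-case}, using continuity of the range projector of a fixed-rank unfolding together with the closed form \eqref{eq:Qoptimal} for the OP solution, and both take the multilinear reconstruction $\Psi$ as the continuous inverse.

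Your version is more complete than the paper's on several points:
\begin{itemize}
\item You show $\sk(\cG)=\tilde\Ub_k^\top\sk(\cX)\tilde\Ub_k$, so the core has full mode-$k$ rank and $\Phi$ lands in the stated codomain.
\item You show $\sk(\Psi(\cG,\Vb_l,\dots,\Vb_d))=\Vb_k\sk(\cG)\Vb_k^\top$, so the reconstruction satisfies the rank and anchor conditions.
\item You prove $\Psi\circ\Phi=\mathrm{id}$, which is what actually gives injectivity. The paper's appeal to $\tilde\Ub_k$ being ``uniquely determined'' only gives well-definedness.
\item You place the transposes correctly. The theorem statement and the paper's surjectivity step have them swapped.
\end{itemize}
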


\begin{proof}
    From the proof~\ref{proof:matrix-case}, we have already proved the continuity of the map $\Phi(\cdot\mid\Vb_0):\Mb\to(\Mb \tilde \Vb,\tilde \Vb)$. For the tensor case, notice that the map $\Phi(\cdot\mid\Ub_{l,0},...,\Ub_{d,0})$ is obtained by the iterative use of $\Phi(\cdot\mid\Ub_{k,0})$ for different unfoldings of the tensor, the continuity of $\Phi(\cdot\mid\Ub_{l,0},...,\Ub_{d,0})$ is also clear.
    
    More precisely, for each $k$ with $l\leq k\leq d$, the map from $\mathcal X\to \tilde \Ub_k$ is given by $\tilde \Ub_k=\op(\Ub_k,\Ub_{k,0})$, with $\Ub_k$ obtained from the SVD of $\mathcal X_k=\Ub_k\bSigma_k \Vb_k^\top$, where $\mathcal X_k\in\mathbb R^{n_k\times(n_1\cdots n_{k-1}n_{k+1}\cdots n_d)}$ is the matricization of $\mathcal X$. Equivalently, we have  $\Phi(\mathcal X_k\mid\Ub_{k,0})=(\Mb_k,\tilde \Ub_k)$ for some $\Mb_k$. Since the matricization $\mathcal X\to\mathcal X_k$ is clearly continuous, the map $\mathcal X\to\tilde\Ub_k$ is also continuous. For bijectivity, since $\tilde\Ub_k$ are all uniquely determined by $\mathcal X$, we have the injectivity of $\Phi(\cdot\mid\Ub_{l,0},...,\Ub_{d,0})$. The surjectivity comes from the fact that for any $(\cC,\tilde \Ub_{l},...,\tilde \Ub_d)$, and $\mathcal X=\cC\times_l\tilde \Ub_l^\top\cdots\times_d\tilde \Ub_d^\top$, the projection matrix of the mode-$k$ subspace of $\mathcal X$ is $\tilde \Ub_k\tilde\Ub_k^\top$ which is independent of other Grassmannian factors, so the $k$-th Grassmannian factor of $\mathcal X$ is just $\tilde \Ub_k$.

    The inverse map is still a product map and is also continuous, so the proof is finished.
\end{proof}

\section{Experiment Details} \label{app:exp_settings}
All baseline results share the same number of training epochs, batch size, and sampling steps as DiffATS,
summarized in Table~\ref{tab:hyperparams}.
All experiments are run on a single NVIDIA H100 GPU.

\begin{table}[h]
\centering
\begin{tabular}{l ccc}
\toprule
Dataset & Epochs & Batch Size & Sampling Steps \\
\midrule
CelebA-HQ                    & 1000                  & \multirow{6}{*}{32} & \multirow{6}{*}{250} \\
Moving MNIST                 & 2000                  &                     &                      \\
1-d Burgers                  & 1000 &                     &                      \\
1-d reaction-diffusion       & 1000                      &                     &                      \\
2-d Burgers                  & 500  &                     &                      \\
2-d K\'arm\'an vortex street &   500                    &                     &                      \\
\bottomrule
\end{tabular}
\vspace{5pt}
\caption{\textbf{Training hyperparameters across all datasets and methods.}}
\label{tab:hyperparams}
\end{table}

\subsection{Compression Ratio}

For each dataset, Tab.~\ref{tab:reconstruction_error} reports the relative $\ell_2$ reconstruction 
error of the rank-$r$ Tucker projection used in DiffATS, 
computed as $\|\cX - \hat\cX\|_F / \|\cX\|_F$ over $100$ test samples. The chosen ranks retain the dominant signal 
content, with mean relative errors ranging from 
$\sim\!10^{-6}$ (1-d reaction-diffusion) to $\sim\!2.2 \times 10^{-1}$ (Moving MNIST). 
As a reference, CelebA-HQ images reconstructed at rank $32$ per $32 \times 32$ 
patch attain a mean PSNR of $34.11 \pm 3.78$ dB (range $[25.46, 41.94]$).

\begin{table}[h]
\centering
\setlength{\tabcolsep}{4pt}
\small
\resizebox{\textwidth}{!}{%
\begin{tabular}{l l c c c c c}
\toprule
Dataset & Shape & Rank & Mean & Std & Min & Max \\
\midrule
CelebA-HQ 1024 (patch size $32 \times 32$) & $1024{\times}1024{\times}3$ & $32$ & $4.45{\times}10^{-2}$ & $2.19{\times}10^{-2}$ & $1.56{\times}10^{-2}$ & $1.18{\times}10^{-1}$ \\
Moving MNIST & $20{\times}64{\times}64$ & $[15, 64, 20]$ & $2.24{\times}10^{-1}$ & $4.48{\times}10^{-2}$ & $1.15{\times}10^{-1}$ & $3.46{\times}10^{-1}$ \\
2-d Burgers & $200{\times}128{\times}128$ & $[5, 20, 20]$ & $3.26{\times}10^{-3}$ & $2.42{\times}10^{-3}$ & $4.41{\times}10^{-5}$ & $1.18{\times}10^{-2}$ \\
2-d K\'arm\'an vortex  & $200{\times}128{\times}128$ & $[10, 128, 30]$ & $3.40{\times}10^{-2}$ & $3.92{\times}10^{-2}$ & $1.02{\times}10^{-3}$ & $1.78{\times}10^{-1}$ \\
1-d Burgers (patch size $32 \times 20$)& $320{\times}640$ & $32$ & $5.62{\times}10^{-3}$ & $6.73{\times}10^{-3}$ & $3.24{\times}10^{-8}$ & $2.52{\times}10^{-2}$ \\
1-d reaction-diffusion (patch size $32 \times 20$) & $320{\times}640$ & $16$ & $4.66{\times}10^{-6}$ & $1.56{\times}10^{-5}$ & $3.82{\times}10^{-7}$ & $1.55{\times}10^{-4}$ \\
\bottomrule
\end{tabular}%
}
\vspace{5pt}
\caption{\textbf{Rank-$r$ reconstruction error.} Relative $\ell_2$ error $\|\cX - \hat\cX\|_F / \|\cX\|_F$ of the Tucker projection used in DiffATS, evaluated over $100$ test samples per dataset. For 1-d datasets, shapes are reported after patchification.}
\label{tab:reconstruction_error}
\end{table}

Tab.~\ref{tab:compression_ratios_all} summarizes the per-sample compression ratios used by each method across all datasets. Across all datasets, DiffATS uses a compression ratio at least as large as those of the compressed-domain baselines (DCTDiff, SDIFT, AvgPool); FNO operates directly on the full-resolution field and does not compress.

\begin{table}[h]
\centering
\small
\begin{tabular}{l ccccc}
\toprule
Dataset & DiffATS & DCTDiff & SDIFT & AvgPool & FNO \\
\midrule
CelebA-HQ              & $16.00$  & $16.00$  & $16.00$  & $16.00$ & --  \\
Moving MNIST           & $3.94$   & $3.94$   & $3.94$   & $4.00$  & --  \\
1-d Burgers            & $6.67$   & $6.67$   & $6.67$   & $4.00$  & $1.00$ \\
1-d reaction-diffusion & $13.33$  & $13.33$  & $13.33$  & $4.00$  & $1.00$ \\
2-d Burgers            & $210.31$ & $210.31$ & $202.00$ & $64.00$ & $1.00$ \\
2-d K\'arm\'an vortex  & $137.41$ & $130.60$ & $118.87$ & $64.00$ & $1.00$ \\
\bottomrule
\end{tabular}
\vspace{5pt}
\caption{\textbf{Per-sample compression ratios across methods and datasets.} 
``--'' indicates the baseline is not applied. 
FNO has compression ratio $1$ since it operates on the full-resolution field.}
\label{tab:compression_ratios_all}
\end{table}

\subsection{Patchification}\label{app:patchification}
Fig.~\ref{fig:patchification} illustrates the patchification process used in our experiments.
Although not all datasets require it, we apply patchification to CelebA-HQ, 1-d Burgers,
and 1-d reaction-diffusion to reduce the truncation error of the low-rank approximation
and improve generation quality.

For each layer of the tensor, we partition the spatial dimensions into non-overlapping
patches and flatten each patch into a vector. Concatenating these vectors yields a new
tensor on which we apply TGP (or MGP in the matrix case).

Table~\ref{tab:svd_patchify_ablation} demonstrates the effectiveness of patchification in preserving information.

\begin{figure}[h]
\centering
\begin{minipage}[b]{0.40\textwidth}
    \centering
    \includegraphics[width=\linewidth]{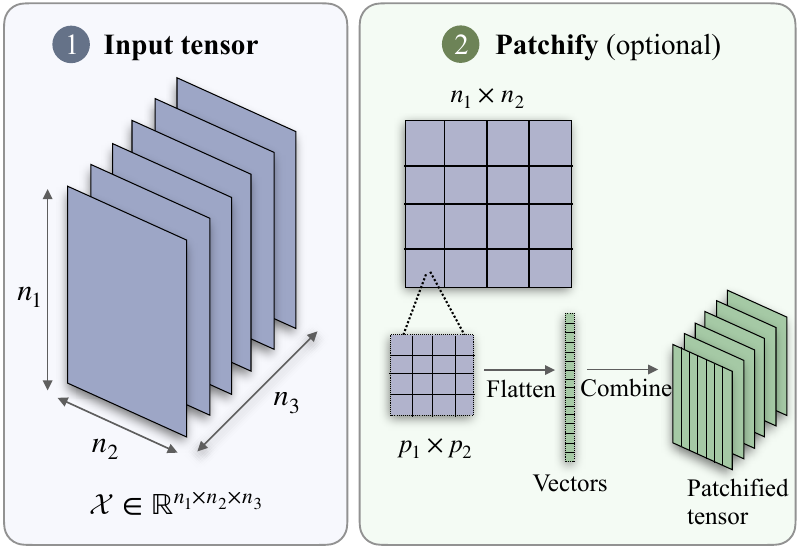}
    \caption{\textbf{Patchification}.}
    \label{fig:patchification}
\end{minipage}\hfill
\begin{minipage}[b]{0.58\textwidth}
    \centering
    \small
    \begin{tabular}{c cc cc}
    \toprule
    & \multicolumn{2}{c}{RMSE $\downarrow$} & \multicolumn{2}{c}{PSNR $\uparrow$ (dB)} \\
    \cmidrule(lr){2-3} \cmidrule(lr){4-5}
    Rank $r$ & Patchify & No patch & Patchify & No patch \\
    \midrule
    $16$  & $\mathbf{7.55}$ & $12.35$ & $\mathbf{31.03}$ & $26.50$ \\
    $32$  & $\mathbf{5.52}$ & $8.57$  & $\mathbf{33.99}$ & $29.80$ \\
    $64$  & $\mathbf{3.47}$ & $5.54$  & $\mathbf{38.49}$ & $33.83$ \\
    $128$ & $\mathbf{1.74}$ & $2.89$  & $\mathbf{45.19}$ & $40.01$ \\
    \bottomrule
    \end{tabular}
    \captionof{table}{\textbf{SVD low-rank reconstruction on CelebA-HQ ($1024\times1024 \times3$, $100$ images, seed $42$).} Patchify uses $32\times32$ patches; the no-patch variant applies SVD directly to the $1024\times1024$ pixel matrix. Metrics are averaged across the three RGB channels and ten images.}
    \label{tab:svd_patchify_ablation}
\end{minipage}
\end{figure}

\subsection{DiffATS}
Tab.~\ref{tab:diffats_hyperparams}
summarizes the hyperparameter settings of DiffATS across all datasets.
We use a DiT backbone~\cite{peebles2023scalable} as the score network, with the per-dataset configuration detailed in Tab.~\ref{tab:dit_backbone}.

\begin{table}[h]
\centering
\resizebox{\textwidth}{!}{%
\begin{tabular}{l c c c c}
\toprule
Dataset & Patchification & Diffusion Type & Scheduler & Optimizer \\
\midrule
CelebA-HQ              & \checkmark   & Unconditional & \multirow{6}{*}{Linear} & \multirow{6}{*}{AdamW ($\mathrm{lr}=10^{-4}$, $\beta=(0.9, 0.99)$)} \\
Moving MNIST           & $\times$     & Unconditional & & \\
1-d Burgers            & \checkmark   & Conditional   & & \\
1-d reaction-diffusion & \checkmark   & Conditional   & & \\
2-d Burgers            & $\times$     & Conditional   & & \\
2-d K\'arm\'an         & $\times$     & Conditional   & & \\
\bottomrule
\end{tabular}%
}
\vspace{5pt}
\caption{\textbf{Hyperparameter settings of DiffATS for all datasets.}}
\label{tab:diffats_hyperparams}
\end{table}

\begin{table}[h]
\centering
\begin{tabular}{l c c c}
\toprule
Dataset & Hidden Dim & Depth & Heads \\
\midrule
CelebA-HQ              & \multirow{3}{*}{$768$} & \multirow{3}{*}{$12$} & \multirow{3}{*}{$12$} \\
1-d Burgers            & & & \\
1-d reaction-diffusion & & & \\
\midrule
Moving MNIST           & \multirow{3}{*}{$512$} & \multirow{3}{*}{$8$}  & \multirow{3}{*}{$8$} \\
2-d Burgers            & & & \\
2-d K\'arm\'an         & & & \\
\bottomrule
\end{tabular}
\vspace{5pt}
\caption{\textbf{DiT backbone configuration of DiffATS across datasets.}}
\label{tab:dit_backbone}
\end{table}

\subsection{Diffusion Training and Sampling}
All diffusion-based methods (DiffATS, DCTDiff, SDIFT, and AvgPool) share the same training and sampling protocol; only the representation on which the score network operates differs.
For training, we adopt the standard DDPM formulation with $T=1000$ timesteps and a linear $\beta$-schedule, optimizing the $\epsilon$-prediction loss.
For sampling, we use DDIM~\cite{ddim} with $250$ steps. DDIM defines a family of non-Markovian forward processes that share the same marginals as DDPM, which admits a deterministic reverse sampler and yields high-quality samples in far fewer steps than the $T=1000$ used at training time. FNO is a deterministic neural operator and does not involve diffusion training or sampling.

\subsection{Baselines}
\noindent\textbf{DCTDiff.}
For DCTDiff, the original implementation operates in the YCbCr color space and uses chroma subsampling. On CelebA-HQ, to provide a fair comparison baseline, we train DCTDiff entirely in RGB by applying a block DCT independently to each RGB channel without chroma subsampling. To adapt DCTDiff to spatiotemporal tensor data such as 2-d Burgers and 2-d K\'arm\'an vortex, we extend the 2D block DCT to a 3D block DCT. The $(T,H,W)$ volume is divided into non-overlapping rectangular prisms, then 3D DCT is applied within each block, and coefficients are ordered using a 3D zigzag traversal from low to high frequency. The remaining compression and diffusion steps then follow the DCTDiff procedure in the resulting compressed DCT space.

\noindent\textbf{SDIFT.}
For SDIFT, each trajectory is represented as an explicit sequence of Tucker cores. Equivalently, the compressed latent for one trajectory has shape $(T \times R_1 \times R_2 \times R_3)$, where $T$ indexes time, and $(R_1,R_2,R_3)$ denote the spatial Tucker-core ranks at each timestep. Importantly, SDIFT does not compress across time, but instead keeps a separate Tucker core for each timestep. For the 1-d PDE datasets, such as Burgers and reaction-diffusion with shape $(T,X)$, we use a sequence of one-dimensional spatial cores, represented as $(T \times 1 \times 1 \times R_x)$, to fit the Tucker-core representation. For 2-d Burgers and 2-d K\'arm\'an vortex, with $T=201$ and $H=W=128$, this means the compressed representation still contains $T$ cores per trajectory. Therefore under a fixed compression budget, long temporal sequences require smaller spatial/core ranks, which can reduce reconstruction quality and represents a limitation of the SDIFT for PDEs with many timesteps.

\noindent\textbf{FNO.}
FNO, by contrast, operates directly on the full-resolution field without spatial compression. We train FNO on one-step prediction pairs $(u_t,u_{t+1})$ and generate the full temporal sequence autoregressively. Since each epoch iterates over $N(T-1)$ full-resolution training pairs, the per-epoch cost scales with the full spatiotemporal data volume rather than with a compressed latent representation. Consequently, under similar time budgets, FNO requires substantially higher training cost per epoch than the compressed-domain baselines. In our experiments, this higher computational cost did not translate into stronger predictive performance.

\noindent\textbf{Average pooling.}
Average pooling serves as a straightforward spatial compression baseline that reduces each field to a compact latent representation without any learned transform. Concretely, for a 2-d PDE trajectory with shape $(T, H, W)$, we apply average pooling with a spatial stride of $k$, yielding a compressed representation of shape $(T, H/k, W/k)$. For 1-d PDE datasets with field shape $(T, X)$, we apply 1-d average pooling along the spatial axis to obtain $(T, X/k)$. The compression ratio is thus $k^2$ for 2-d fields and $k$ for 1-d fields. For the CelebA-HQ dataset, which consists of static RGB images of shape $(3,H,W)$, we apply 2-d average pooling and give compressed representations of shape $(3,H/k,W/k)$. For Moving MNIST where each sample is a grayscale video of shape $(T,H,W)$, we apply 2-d average pooling to each frame, yielding $(T,H/k,W/k)$ with a $k^2$ compression ratio. At inference, generated samples are upsampled back to the original resolution via linear interpolation for 1-d fields and bilinear interpolation for all 2-d domains, and all metrics are evaluated at the original resolution.

\subsection{Network Capacity}
To ensure a fair comparison, we configure each baseline so that its score network has a number of parameters comparable to DiffATS. Tab.~\ref{tab:param_counts} reports the per-dataset parameter counts, and differences across methods are within a few percent on every dataset.

\begin{table}[h]
\centering
\setlength{\tabcolsep}{6pt}
\small
\begin{tabular}{l c c c c}
\toprule
Dataset & DiffATS & DCTDiff & FNO & SDIFT \\
\midrule
CelebA-HQ                    & $134.40$M & $134.00$M & --        & $128.00$M \\
Moving MNIST                 & $38.90$M  & $39.70$M  & --        & $40.27$M  \\
1-d Burgers                  & $131.93$M & $132.10$M & $131.78$M & $132.20$M \\
1-d reaction-diffusion       & $131.93$M & $132.10$M & $131.78$M & $133.30$M \\
2-d Burgers                  & $40.11$M  & $41.68$M  & $40.22$M  & $40.00$M  \\
2-d K\'arm\'an vortex street & $41.33$M  & $42.40$M  & $41.50$M  & $42.40$M  \\
\bottomrule
\end{tabular}
\vspace{5pt}
\caption{\textbf{Number of score-network parameters across methods and datasets.} ``M'' denotes million. ``--'' indicates the baseline is not applied (FNO is PDE-specific).}
\label{tab:param_counts}
\end{table}

\subsection{PDE Datasets Generation Details} \label{app:pde_gen_details}
\noindent\textbf{1-d Burgers' equation.}
For the dataset generation, we follow the same procedure in \cite{takamoto2022pdebench}.
The 1-d Burgers' equation is a nonlinear PDE modeling the diffusion process in fluid dynamics
as 
\begin{align*}
\partial_t u(t,x) + \partial_x\bigl(u^2(t,x)/2\bigr)
&= \nu/\pi \partial_{xx} u(t,x),
\quad x \in (0,1),\ t \in (0,2], \\
u(0,x) &= u_0(x),
\quad x \in (0,1),
\end{align*}
with periodic boundary conditions $u(t, 0) = u(t, 1)$ and $\partial_x u(t, 0) = \partial_x u(t, 1)$ for $t \in (0, 2]$, where $\nu$ is the diffusion coefficient. In our datasets, we generate training samples and test samples
with $\nu$ i.i.d. sampled from $[10^{-5},\ 5\times10^{-5},\ 10^{-4},\ 5\times10^{-4},\ 10^{-3},\ 5\times10^{-3},\ 10^{-2},\ 5\times10^{-2},\ 10^{-1}]$.
And $\nu$ is also provided as a conditioning variable to the diffusion model.

\noindent\textbf{1-d reaction-diffusion.}
For 1-d reaction-diffusion dataset generation, we also follow the same procedure in \cite{takamoto2022pdebench}.
The equation is expressed as
\begin{align*}
\partial_t u(t,x) - \nu \partial_{xx} u(t,x) - \rho u(1-u) &= 0,
\quad x \in (0,1),\ t \in (0,1], \\
u(0,x) &= u_0(x),
\quad x \in (0,1),
\end{align*}
with periodic boundary conditions $u(t, 0) = u(t, 1)$ and $\partial_x u(t, 0) = \partial_x u(t, 1)$ for $t \in (0, 1]$, where $\nu$ is the diffusion coefficient and $\rho$ is the reaction rate of the logistic source term.
For both training and test samples, we draw $\nu$ i.i.d. from
$[10^{-5},\ 5\times10^{-5},\ 10^{-4},\ 5\times10^{-4},\ 10^{-3},\ 5\times10^{-3},\ 10^{-2},\ 5\times10^{-2},\ 10^{-1}]$
and $\rho$ i.i.d. from $[0.1,\ 0.5,\ 1.0,\ 2.0]$.
Both $\nu$ and $\rho$ are provided as conditioning variables injected into the diffusion model.

\noindent\textbf{2-d Burgers' equation.}
The dataset generation process follows the procedure in~\cite{koehler2024apebench}.
The 2-d Burgers' equation describes the evolution of a two-dimensional
velocity field \(\mathbf{u}=(u_1,u_2)\) on a periodic domain:
\[
\begin{aligned}
\partial_t \mathbf{u} + (\mathbf{u}\cdot \nabla)\mathbf{u}
&= \nu \Delta \mathbf{u},
&& \mathbf{x}\in (0,1)^2,\quad t\in (0,T],\\
\mathbf{u}(0,\mathbf{x})
&= \mathbf{u}_0(\mathbf{x}),
&& \mathbf{x}\in (0,1)^2,
\end{aligned}
\]
with periodic boundary conditions $\mathbf{u}(t, 0, y) = \mathbf{u}(t, 1, y)$ 
and $\mathbf{u}(t, x, 0) = \mathbf{u}(t, x, 1)$ for $t \in (0, T]$. Here $\mathbf{x} = (x, y)$ and $\nu > 0$ is the viscosity coefficient; the vector equation corresponds to two coupled nonlinear PDEs for the velocity components $u_1$ and $u_2$.
For both training and test samples, $\nu$ is sampled i.i.d. from the continuous
interval $\mathrm{Uniform}(1\times 10^{-5},1\times 10^{-4})$ and provided as a
conditioning variable to the diffusion model.
For training, the two components $u_1$ and $u_2$ are pooled as independent samples
rather than modeled as a joint vector field.

\noindent\textbf{2-d K\'arm\'an vortex street.}
The K\'arm\'an vortex street describes the periodic vortex shedding behind a
2-d incompressible flow past a cylinder at moderate Reynolds number.
We generate trajectories with a 2-d Lattice Boltzmann Method (LBM) solver
implemented in Taichi, following the setup of \url{https://github.com/hietwll/LBM_Taichi#lbm_taichi}.
The simulation domain is a rectangular channel with the following boundary conditions: Dirichlet inflow $\mathbf{u} = (0.1, 0)$ on the left, zero-gradient (Neumann) outflow on the right, Dirichlet no-slip ($\mathbf{u} = 0$) on the top and bottom walls, and no-slip (bounce-back) on the cylinder surface.
The dataset enumerates the Cartesian product of four parameters, yielding
$5\times 5\times 4\times 2 = 200$ trajectories in total:
the LBM viscosity $\nu \in \{0.020,\ 0.022,\ 0.025,\ 0.028,\ 0.032\}$,
the cylinder center $c_x \in \{26,\ 29,\ 32,\ 35,\ 38\}$ and
$c_y \in \{56,\ 61,\ 67,\ 72\}$ (in pixels), and the cylinder radius
$r \in \{7,\ 9\}$ (in pixels). All four parameters are provided as conditioning variables to the diffusion model. The resulting Reynolds numbers $\mathrm{Re} = 0.1 \cdot 2r / \nu$ fall in $[50, 80]$, the classical periodic vortex-shedding regime.

\section{Additional Experiment Results}\label{app:additional_results}
In this section, we provide additional experimental results and generated samples for reference.

\subsection{Full PDE Metrics}\label{app:pde_full_metrics}
We report the complete metric breakdown for the PDE benchmarks summarized in Tab.~\ref{tab:pde_main}. Each entry is the mean $\pm$ standard deviation over $5$ random seeds. $\mathrm{Rel.~Err.}_1$ and $\mathrm{Rel.~Err.}_2$ are relative errors in $\ell_1$ and $\ell_2$ norms, respectively; $\mathrm{Avg.~rMSE}$ is the average rMSE across time frames.

\begin{table}[h]
\centering
\setlength{\tabcolsep}{3pt}
\small
\resizebox{\textwidth}{!}{%
\begin{tabular}{l ccc ccc}
\toprule
& \multicolumn{3}{c}{1-d Burgers} & \multicolumn{3}{c}{1-d reaction-diffusion} \\
\cmidrule(lr){2-4} \cmidrule(lr){5-7}
Method
& $\mathrm{Rel.~Err.}_1\!\downarrow$ & $\mathrm{Rel.~Err.}_2\!\downarrow$ & $\mathrm{Avg.~rMSE}\!\downarrow$
& $\mathrm{Rel.~Err.}_1\!\downarrow$ & $\mathrm{Rel.~Err.}_2\!\downarrow$ & $\mathrm{Avg.~rMSE}\!\downarrow$ \\
\midrule
DCTDiff~\cite{ning2024dctdiff}    & $0.2103 \pm 0.0090$           & $0.2953 \pm 0.0100$           & $0.1195 \pm 0.0040$                      & $0.0681 \pm 0.0002$           & $0.0713 \pm 0.0001$           & $0.0497 \pm 0.0001$ \\
SDIFT~\cite{chen2025generating}   & $0.3513 \pm 0.3100$           & $0.3884 \pm 0.2600$           & $0.1497 \pm 0.0590$                      & $0.3355 \pm 0.0087$           & $0.3406 \pm 0.0069$           & $0.2199 \pm 0.0030$ \\
FNO~\cite{li2020fourier}          & $0.1296 \pm 0.1500$           & $0.1913 \pm 0.1800$           & $0.0697 \pm 0.1100$                      & $0.2649 \pm 0.0061$           & $0.3114 \pm 0.0069$           & $0.2103 \pm 0.0045$ \\
AvgPool                           & $0.5933 \pm 0.0140$           & $0.6602 \pm 0.0120$           & $0.2465 \pm 0.0019$                      & $0.3405 \pm 0.0038$           & $0.3975 \pm 0.0038$           & $0.2655 \pm 0.0025$ \\
DiffATS (ours)                    & $\mathbf{0.0816 \pm 0.0001}$  & $\mathbf{0.1203 \pm 0.0001}$  & $\mathbf{0.0517 \pm 0.0000}$ & $\mathbf{0.0076 \pm 0.0000}$ & $\mathbf{0.0085 \pm 0.0000}$ & $\mathbf{0.0057 \pm 0.0000}$ \\
\bottomrule
\end{tabular}%
}
\vspace{6pt}
\caption{\textbf{Full results on 1-d PDE datasets.} Lower is better for all metrics.}
\label{tab:pde_1d_full}
\end{table}

\begin{table}[h]
\centering
\setlength{\tabcolsep}{3pt}
\small
\resizebox{\textwidth}{!}{%
\begin{tabular}{l ccc ccc}
\toprule
& \multicolumn{3}{c}{2-d Burgers} & \multicolumn{3}{c}{2-d K\'arm\'an vortex street} \\
\cmidrule(lr){2-4} \cmidrule(lr){5-7}
Method
& $\mathrm{Rel.~Err.}_1\!\downarrow$ & $\mathrm{Rel.~Err.}_2\!\downarrow$ & $\mathrm{Avg.~rMSE}\!\downarrow$
& $\mathrm{Rel.~Err.}_1\!\downarrow$ & $\mathrm{Rel.~Err.}_2\!\downarrow$ & $\mathrm{Avg.~rMSE}\!\downarrow$ \\
\midrule
DCTDiff~\cite{ning2024dctdiff}    & $0.3036 \pm 0.0046$          & $0.3241 \pm 0.0048$          & $0.0639 \pm 0.0011$          & $0.3789 \pm 0.0540$          & $0.5550 \pm 0.0710$          & $0.0040 \pm 0.0006$ \\
SDIFT~\cite{chen2025generating}   & $0.7831 \pm 0.2370$          & $0.7323 \pm 0.2030$          & $0.1070 \pm 0.0015$           & $0.8053 \pm 0.1200$          & $0.8654 \pm 0.1200$          & $0.7631 \pm 0.2000$ \\
FNO~\cite{li2020fourier}          & $0.2971 \pm 0.2000$          & $0.3423 \pm 0.3000$          & $0.0790 \pm 0.1000$          & $0.2508 \pm 0.0200$          & $0.3442 \pm 0.0190$          & $0.0025 \pm 0.0002$ \\
AvgPool                           & $0.3544 \pm 0.0036$          & $0.3747 \pm 0.0029$          & $0.0780 \pm 0.0006$          & $0.6468 \pm 0.0049$          & $0.6161 \pm 0.0020$          & $0.0044 \pm 0.0000$ \\
DiffATS (ours)                    & $\mathbf{0.1775 \pm 0.0005}$ & $\mathbf{0.1845 \pm 0.0005}$ & $\mathbf{0.0424 \pm 0.0001}$ & $\mathbf{0.1640 \pm 0.0003}$ & $\mathbf{0.2233 \pm 0.0005}$ & $\mathbf{0.0016 \pm 0.0000}$ \\
\bottomrule
\end{tabular}%
}
\vspace{6pt}
\caption{\textbf{Full results on 2-d PDE datasets.} Lower is better for all metrics.}
\label{tab:pde_2d_full}
\end{table}

\subsection{CelebA-HQ}
We additionally report the 
precision and recall~\cite{kynkaanniemi2019improved} as secondary metrics. 
We achieve the highest precision with our method across all baselines and ablation variations,
while the recall of our method is slightly lower than that of AvgPool.

Generated samples from each method are shown in Fig.~\ref{fig:celeba_samples}.
The qualitative comparison is consistent with the quantitative results in Tab.~\ref{tab:celeba}
and Tab.~\ref{tab:celeba_precrec}:
DiffATS produces sharper facial details and more coherent compositions than DCTDiff, SDIFT,
and AvgPool, matching its leading FID and precision scores.

\begin{table}[h]
\centering
\begin{tabular}{l cc}
\toprule
Method & Precision $\uparrow$ & Recall $\uparrow$ \\
\midrule
DCTDiff~\cite{ning2024dctdiff}              & $0.28$              & $0.08$              \\
SDIFT~\cite{chen2025generating}             & $0.00$              & $0.00$              \\
AvgPool                                     & $0.47$          & $\mathbf{0.34}$          \\
DiffATS (ours)                              & $\mathbf{0.66}$ & $0.25$          \\
\midrule
\multicolumn{3}{l}{\emph{Ablations of DiffATS}} \\
Shared bases                  & $0.51$ & $0.32$ \\
Data-dep. bases w/ aug.       & $0.00$ & $0.00$          \\
Data-dep. bases w/o align.    & $0.37$ & $0.03$          \\
\bottomrule
\end{tabular}
\vspace{6pt}
\caption{\textbf{Precision and recall on CelebA-HQ at $\mathbf{1024\times1024}$.}
Computed using 10{,}000 generated and 10{,}000 real images.
Boldface indicates the best value in each column among all reported methods.}
\label{tab:celeba_precrec}
\end{table}

\begin{figure}[t]
    \centering
    \includegraphics[width=0.99\textwidth]{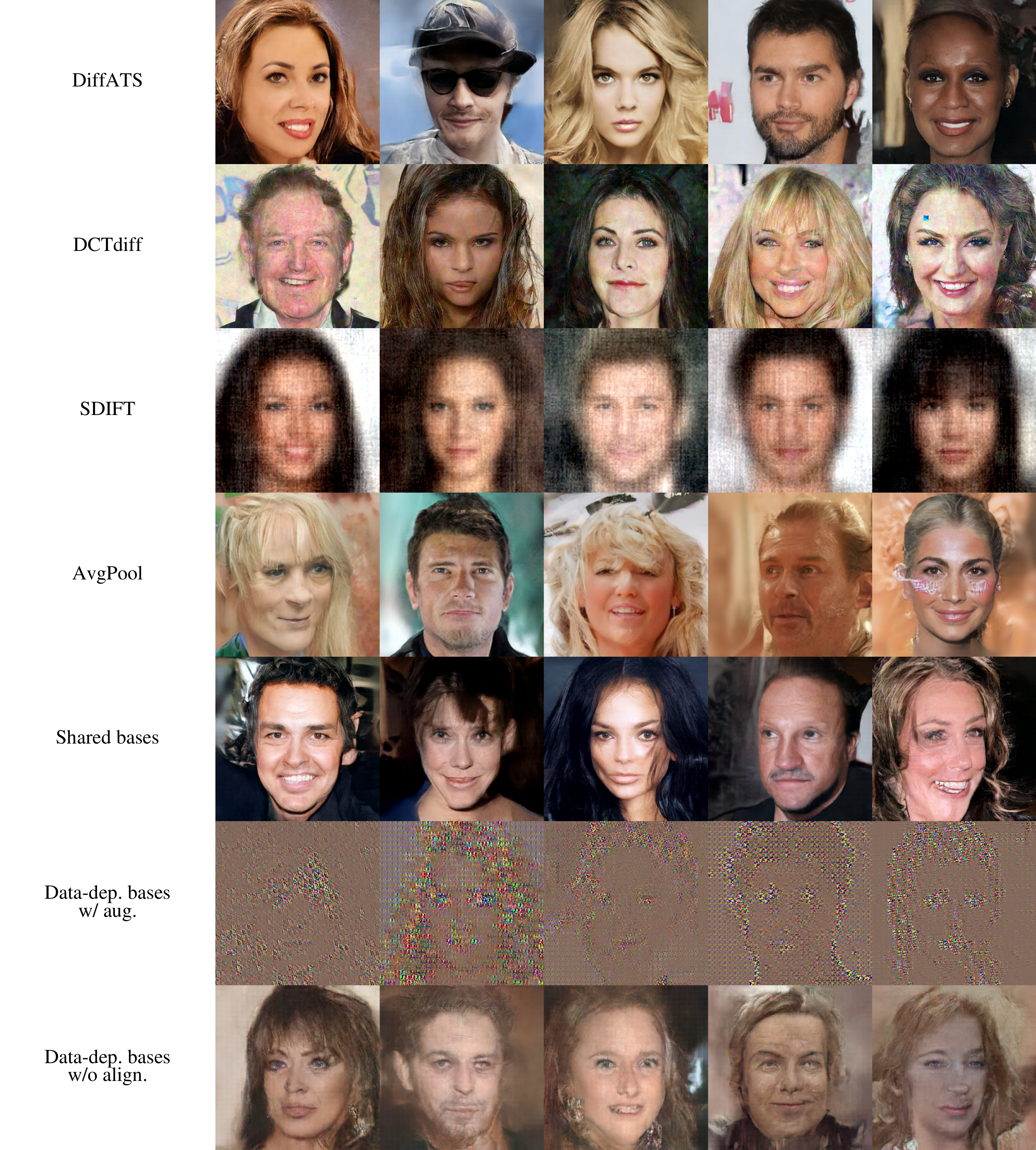}
    \caption{\textbf{Generated samples on CelebA-HQ at $\mathbf{1024 \times 1024}$.} Five samples per method.}
    \label{fig:celeba_samples}
\end{figure}

\subsection{Moving MNIST}
\begin{figure}[t]
    \centering
    \includegraphics[width=0.99\textwidth]{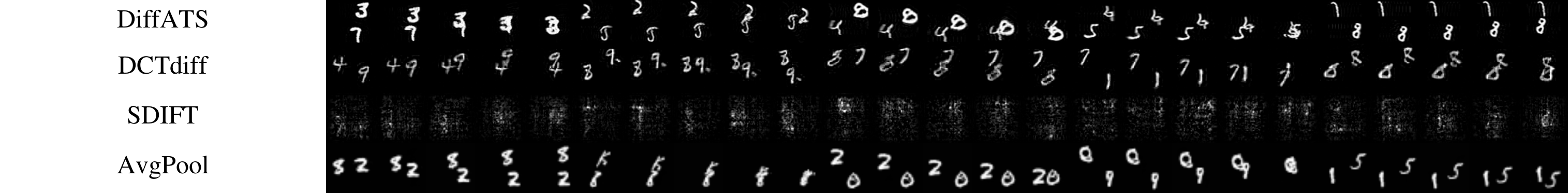}
    \caption{\textbf{Generated samples on Moving MNIST.} Five video samples per method, with each sample shown as a sequence of 5 frames.}
    \label{fig:moving_mnist_samples}
\end{figure}
Fig.~\ref{fig:moving_mnist_samples} shows five generated samples from each method on Moving MNIST, with each sample visualized as 5 frames.
DiffATS produces the sharpest and most coherent digit trajectories, consistent with
its leading FVD score in Tab.~\ref{tab:mmnist_main}.

\subsection{1-d Burgers}
\begin{figure}[H]
    \centering
    \includegraphics[width=0.55\textwidth]{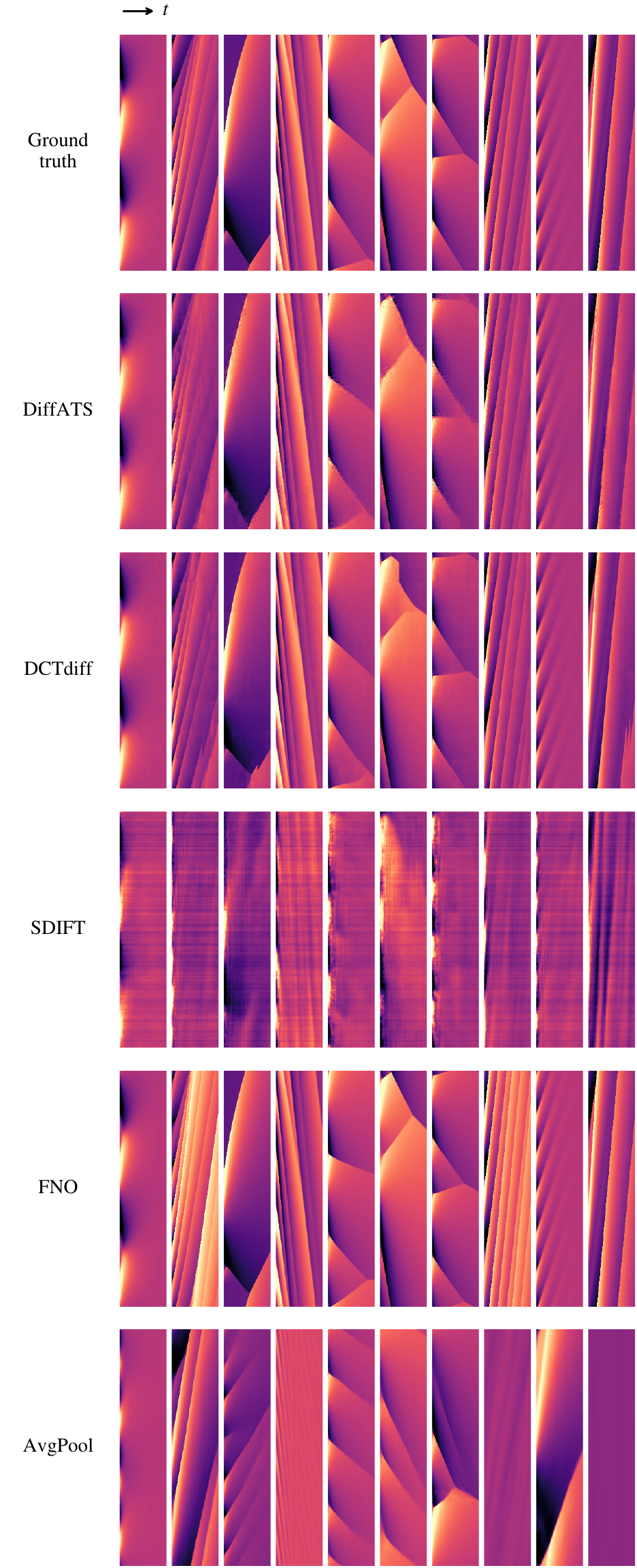}
    \caption{\textbf{Generated samples on 1-d Burgers.} Two randomly selected samples per method, visualized as 2-d spatiotemporal renderings. The top row shows the ground truth; subsequent rows show DiffATS and the baselines.}
    \label{fig:1d_burgers_samples}
\end{figure}

\subsection{1-d reaction-diffusion}
\begin{figure}[H]
    \centering
    \includegraphics[width=0.55\textwidth]{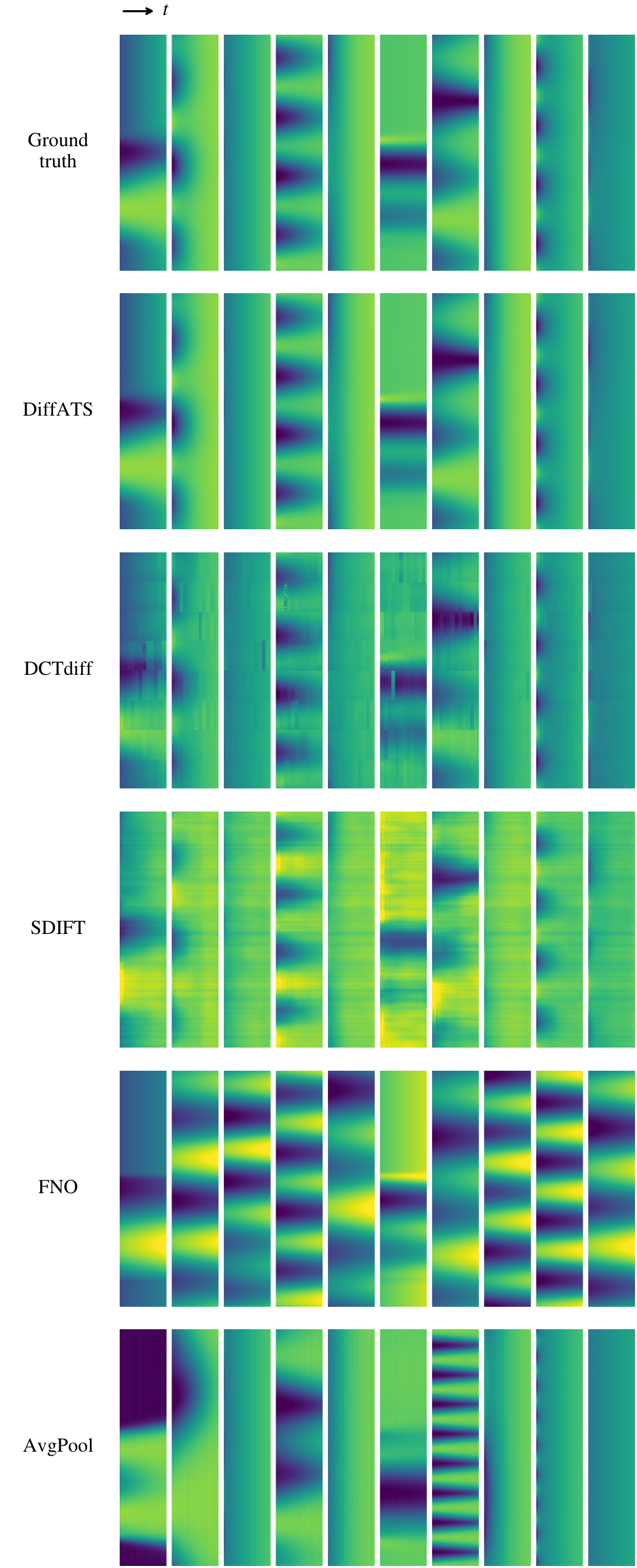}
    \caption{\textbf{Generated samples on 1-d reaction-diffusion.} Two randomly selected samples per method, visualized as 2-d spatiotemporal renderings. The top row shows the ground truth; subsequent rows show DiffATS and the baselines.}
    \label{fig:1d_diffusion_reaction_samples}
\end{figure}

\subsection{2-d Burgers}
\begin{figure}[H]
    \centering
    \includegraphics[width=0.8\textwidth]{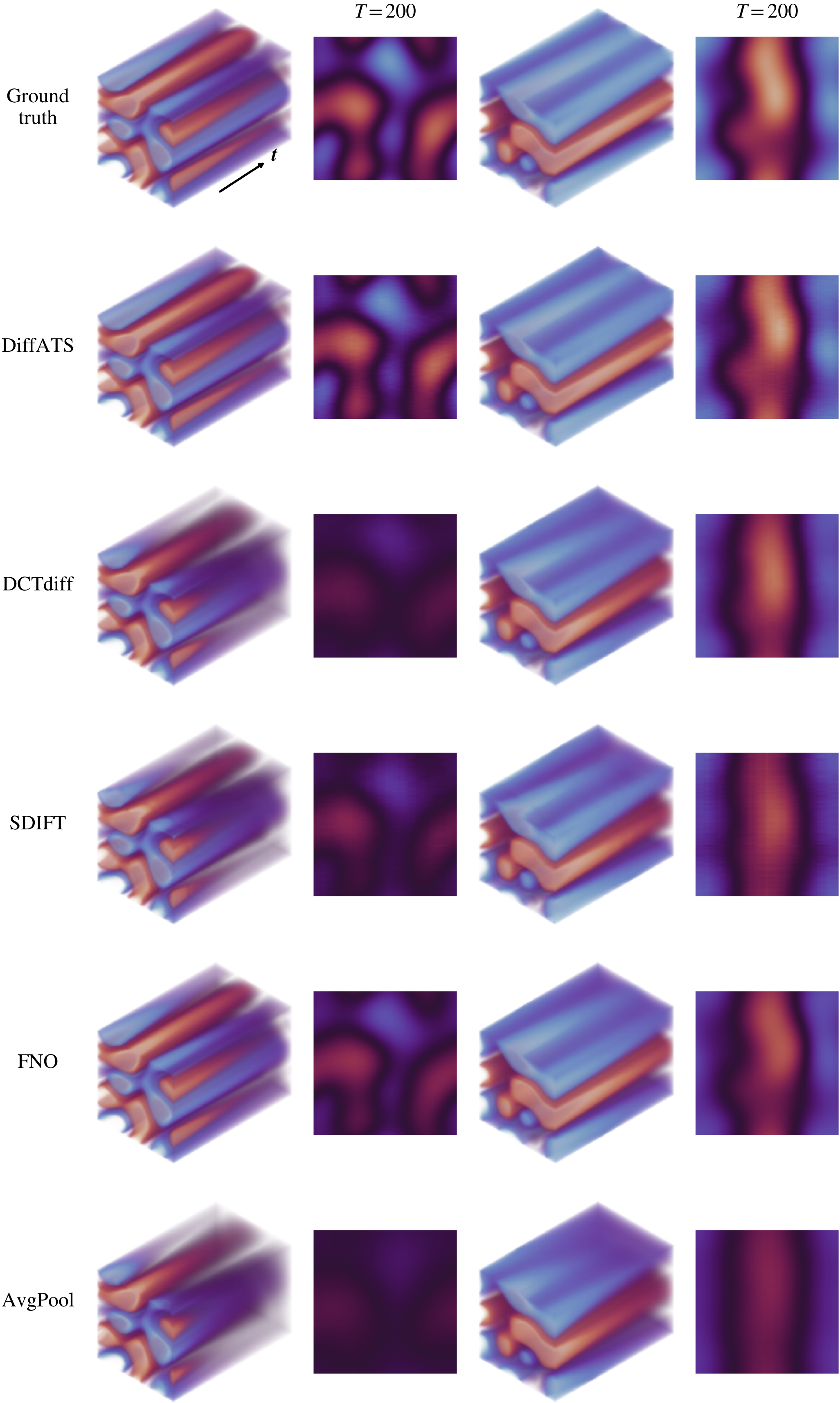}
    \caption{\textbf{Generated samples on 2-d Burgers.} Two randomly selected samples per method, visualized as 3-d spatiotemporal renderings. The top row shows the ground truth; subsequent rows show DiffATS and the baselines. DiffATS most faithfully reproduces the spatial structure and temporal evolution of the ground-truth trajectories, consistent with its leading relative error and rMSE in Tab.~\ref{tab:pde_2d_full}.}
    \label{fig:2d_burgers_samples}
\end{figure}

\subsection{2-d K\'arm\'an Vortex Street}
\begin{figure}[H]
    \centering
    \includegraphics[width=1.0\textwidth]{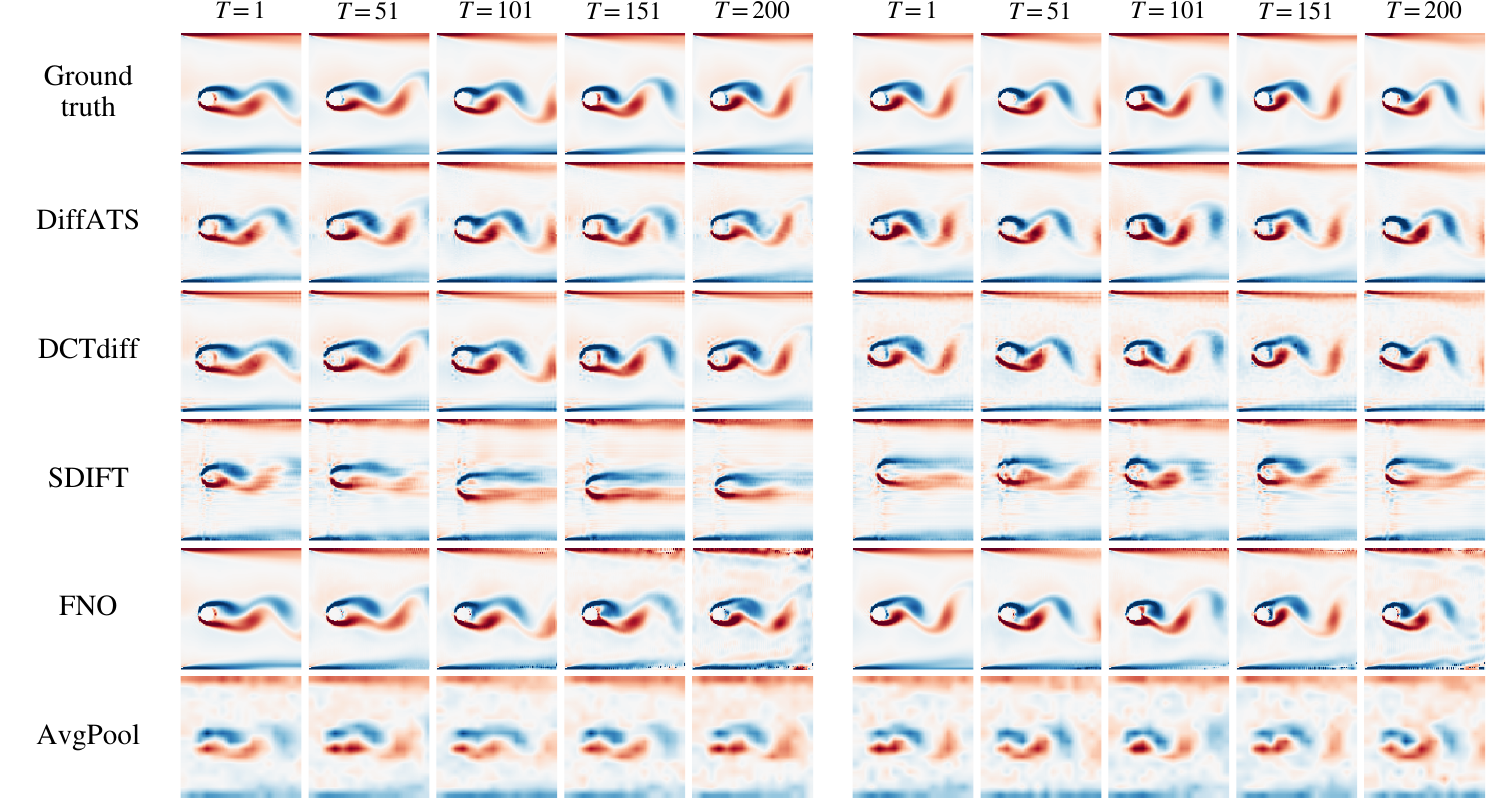}
    \caption{\textbf{Generated samples on 2-d K\'arm\'an vortex street.} Two randomly selected samples per method, visualized as 5 frames. The top row shows the ground truth; subsequent rows show DiffATS and the baselines.}
    \label{fig:2d_karman_samples}
\end{figure}


\end{document}